\newtheorem{theorem}{Theorem}
\newtheorem{example}{Example}
\newtheorem{lemma}{Lemma}
\newtheorem{definition}{Definition}
\newtheorem{assumption}{Assumption}
\newcommand{\nosection}[1]{\vspace{2pt}\noindent\textbf{#1.}}
\newcommand{\E}{\mathbb{E}}
\newcommand{\R}{\mathbb{R}}
\newcommand{\rfdd}{DRFLM}
\newcommand{\w}{\mathbf{w}}
\newcommand{\x}{\boldsymbol{x}}
\newcommand{\Dset}{\mathcal{D}}
\newcommand{\z}{\boldsymbol{z}}
\newcommand{\blambda}{\boldsymbol{\lambda}}
\definecolor{highlight}{RGB}{102,205,170}
\icmltitlerunning{DRFLM: Distributionally Robust Federated Learning
with Inter-client Noise via Local Mixup}
\begin{document}

\twocolumn[
\icmltitle{DRFLM: Distributionally Robust Federated Learning \\
with Inter-client Noise via Local Mixup}

% It is OKAY to include author information, even for blind
% submissions: the style file will automatically remove it for you
% unless you've provided the [accepted] option to the icml2021
% package.

% List of affiliations: The first argument should be a (short)
% identifier you will use later to specify author affiliations
% Academic affiliations should list Department, University, City, Region, Country
% Industry affiliations should list Company, City, Region, Country

% You can specify symbols, otherwise they are numbered in order.
% Ideally, you should not use this facility. Affiliations will be numbered
% in order of appearance and this is the preferred way.
% \icmlsetsymbol{equal}{*}

\begin{icmlauthorlist}
\icmlauthor{Bingzhe Wu}{tx}
\icmlauthor{Zhipeng Liang}{hkust}
\icmlauthor{Yuxuan Han}{hkust}
\icmlauthor{Yatao Bian}{tx}
\icmlauthor{Peilin Zhao}{tx}
\icmlauthor{Junzhou Huang}{tx}
\end{icmlauthorlist}

\icmlaffiliation{tx}{Tencent AI Lab}
\icmlaffiliation{hkust}{Hong Kong University of Science and Technology}

\icmlcorrespondingauthor{Bingzhe Wu}{bingzhewu@tencent.com}
% \icmlcorrespondingauthor{Eee Pppp}{ep@eden.co.uk}

% You may provide any keywords that you
% find helpful for describing your paper; these are used to populate
% the "keywords" metadata in the PDF but will not be shown in the document
% \icmlkeywords{Machine Learning, ICML}

\vskip 0.3in
]

% this must go after the closing bracket ] following \twocolumn[ ...

% This command actually creates the footnote in the first column
% listing the affiliations and the copyright notice.
% The command takes one argument, which is text to display at the start of the footnote.
% The \icmlEqualContribution command is standard text for equal contribution.
% Remove it (just {}) if you do not need this facility.

%\printAffiliationsAndNotice{}  % leave blank if no need to mention equal contribution
\printAffiliationsAndNotice{} % otherwise use the standard text.

\begin{abstract}
Recently, federated learning has emerged as a promising approach for training a global model using data from multiple organizations without leaking their raw data. 
Nevertheless, directly applying federated learning to real-world tasks faces two challenges: (1) heterogeneity in the data among different organizations; and (2) data noises inside individual organizations. 

In this paper, we propose a general framework to solve the above two challenges
simultaneously. 
Specifically, we propose using distributionally robust optimization to mitigate the negative effects caused by data heterogeneity paradigm to sample clients based on a learnable distribution at each iteration. 
Additionally, we observe that this optimization paradigm is easily affected by data noises inside local clients, which has a significant performance degradation in terms of global model prediction accuracy. 
To solve this problem, we propose to incorporate mixup
techniques into the local training process of federated learning. We further provide comprehensive 
theoretical analysis including robustness analysis, convergence analysis, and generalization ability. Furthermore, we conduct empirical studies across different drug discovery tasks, such as ADMET property prediction and drug-target affinity prediction.

\end{abstract}
\section{Introduction}
Recently, Federated learning (FL) is emerging as a promising way to mitigate the information silo problem, in which data are spread across different organizations and cannot be exchanged due to privacy concerns and regulations.  \cite{fedavg,YangLCT19}. There are plentiful attempts to apply FL to various real-world applications including recommender system \cite{ecai_chen,chen_tist}, medical image analysis \cite{sheller2020federated} and drug discovery \cite{fl-qsar,xiongfeddrug}. For example, Chen et al. \cite{ecai_chen} combine the FL method and secure multi-party computation for building recommender systems among different organizations. 
He et al. \cite{fedgnn} proposes a federated GNN learning framework for various molecular property prediction tasks.

Despite the above progress, directly applying FL methods to real-world scenarios still faces severe challenges.
In this paper, we focus on two
critical challenges: (1) \emph{Inter-client data heterogeneity} (2) \emph{Intra-client data noise}.
Specifically, inter-client data heterogeneity refers to data from different clients that always have varying
distributions. This is also called the Non-IID issue in the FL research community \cite{Scaffold,Fedprox}. For example, different pharmaceutical companies/labs may use different high-throughput screening environments to measure the protein-ligand binding affinity, resulting in different DTI assays \cite{bento2014chembl,mendez2019chembl}.
% \bz{add refs}. 
% can have xxxx(assay and scaffold). 
The  intra-client data noise refers to the noise created during the data collection process within each client individually.
A common noise source in drug discovery is the \emph{label noise} caused by measurement noise or the situation where very low/high-affinity values are often encoded as a boundary constant, see  \cite{mendez2019chembl} for a detailed description for the various noise in the largest bioassay deposition website.

% \bz{add refs}.  
% \yt

Numerous efforts have been made to solve the first issue \cite{mohri_agnostic_2019,reisizadeh_robust_2020,Fedprox,Scaffold,ditto,fed_non_iid_graph} from different aspects. %A typical research line is to employ an adaptive training paradigm (by adding a client-aware regularizer \cite{Scaffold} or using an adaptive local optimizer \cite{reddi2021adaptive}) to improve the FL performance in the Non-iid setting (i.e., inter-client data heterogeneity).
%For example, prior work \cite{reddi2021adaptive} proposes an adaptive optimizer for local model updating (e.g., Adam \cite{kingma2015adam} and AdaGrad \cite{duchi2011adaptive}) which allows to implicitly adjust local learning rate according to the loss function computed based on local data. 
%However, most  work in this line lacks theoretical guarantees and needs to tune large amounts of hyper-parameters of the adaptive training paradigm to ensure model performance for different datasets, which is very challenging in real-world applications. 
A recent trend tries an essential approach to overcome the client  heterogeneity by incorporating distributionally robust optimization (DRO) \cite{duchi2011adaptive}  into conventional FL paradigm \cite{mohri_agnostic_2019,deng_distributionally_2021}.  
Recently, 
DRFA \cite{deng_distributionally_2021}  proposes to optimize the distributionally robust empirical loss, which combines different local loss functions with learnable weights. This method theoretically ensures that the learned model can perform well over the worst-case combination of local data distributions. 

Previous work using DRO has shown  effectiveness on some academic benchmarks such as Fashion MNIST and UCI Adults \cite{deng_distributionally_2021,mohri_agnostic_2019}. 
However, when these methods are applied to real-world tasks such as drug discovery, we observe poor performance and training instability. Intuitively, prior work \cite{deng_distributionally_2021, mohri_agnostic_2019} aims to pessimistically optimize the worst-case combination of local distribution which is typically dominated by the local training distribution with more noises. Therefore they suffer serious performance drop issue as shown in Figure \ref{fig:intro_exp}. From this figure, the use of distributionally robust optimization cannot improve the worst-case local RMSE but even significantly reduces the performance.
According to further empirical studies (see details in Section \ref{sec:exp}), we identify that \emph{Intra-client data noise} inside the local client data is the main cause of the above issues. Similar issues have also been discussed in the centralized setting \cite{zhai_doro_2021}, which
% A prior work \cite{zhai_doro_2021} 
points out that  DRO is sensitive to outliers which further hinder model performance. In this paper, we focus on mitigating the effects caused by intra-client data noise in the FL setting.

In order to address the above two critical challenges, we present a general framework named \rfdd\ for building robust neural network models in the FL setting for a wide range of tasks.
To overcome the inter-client data heterogeneity issue, we propose to employ DRO techniques to optimize the global model.
As discussed above, such a DRO method can be easily affected by intra-client data noise \cite{zhai_doro_2021}.
As suggested by previous work \cite{zhai_doro_2021}, a natural method to reduce the noise effect in the centralized setting is to drop samples with large losses during training. 
However, this approach may result in samples that are very valuable to overall training being discarded, which then leads to sub-optimal results.
Hence, we propose to integrate the mixup strategy into the local training process and find that it can significantly improve the empirical performance.

Furthermore, this paper offers an interesting theoretical finding that mixup can provide heterogeneity-dependent generalization bounds, as shown in Theorem \ref{thm: Generalization guarantee}.
In contrast to most FL-related work which mainly focuses on convergence analysis, our theoretical analysis examines more aspects of the proposed FL approach. 
Specifically, our analysis consists of three parts: 
(1) We provide a concrete example to demonstrate the effectiveness of the mixup strategy for mitigating noise effects on conventional DRO FL methods. 
(2) We analyze the generalization bounds of the learned model,  which is the primary theoretical contribution of this paper. 
(3) We provide rigorous convergence analysis of the training process. Based on theoretical analysis, we conduct extensive empirical studies covering a wide range of drug discovery tasks including molecular property prediction and drug-target interaction prediction.  Extensive experimental results show the effectiveness of DRFLM in reducing the effects of client distribution shift and local data noise.

%(Under construction, illustrating different kinds of noises in drug AI datasets): 
%There are several categories of noise from the area of AI-aided drug discovery, e.g, for CHEMBL \citep{mendez2019chembl}:  measurement noise, narrow measurement range,   low/high values are often only encoded as a boundary constant et al.   

% Our major contributions are summarized as follows:
% \begin{itemize}
%     \item We propose a general framework called \rfdd  to  address the two challenges (inter-client data heterogeneity and intra-client data noise) simultaneously when  applying FL methods;
%     % consider two types of noises in FL setting
    
%     \item  We present rigorous  theoretical analysis on the generalization bounds of the model  and convergence of the training process; 
%     \item We empirically demonstrate the superior performance of \rfdd on a variety of real-world noisy drug discovery tasks.   
% \end{itemize}

\begin{figure}
    \centering
    \includegraphics[width=0.3\textwidth]{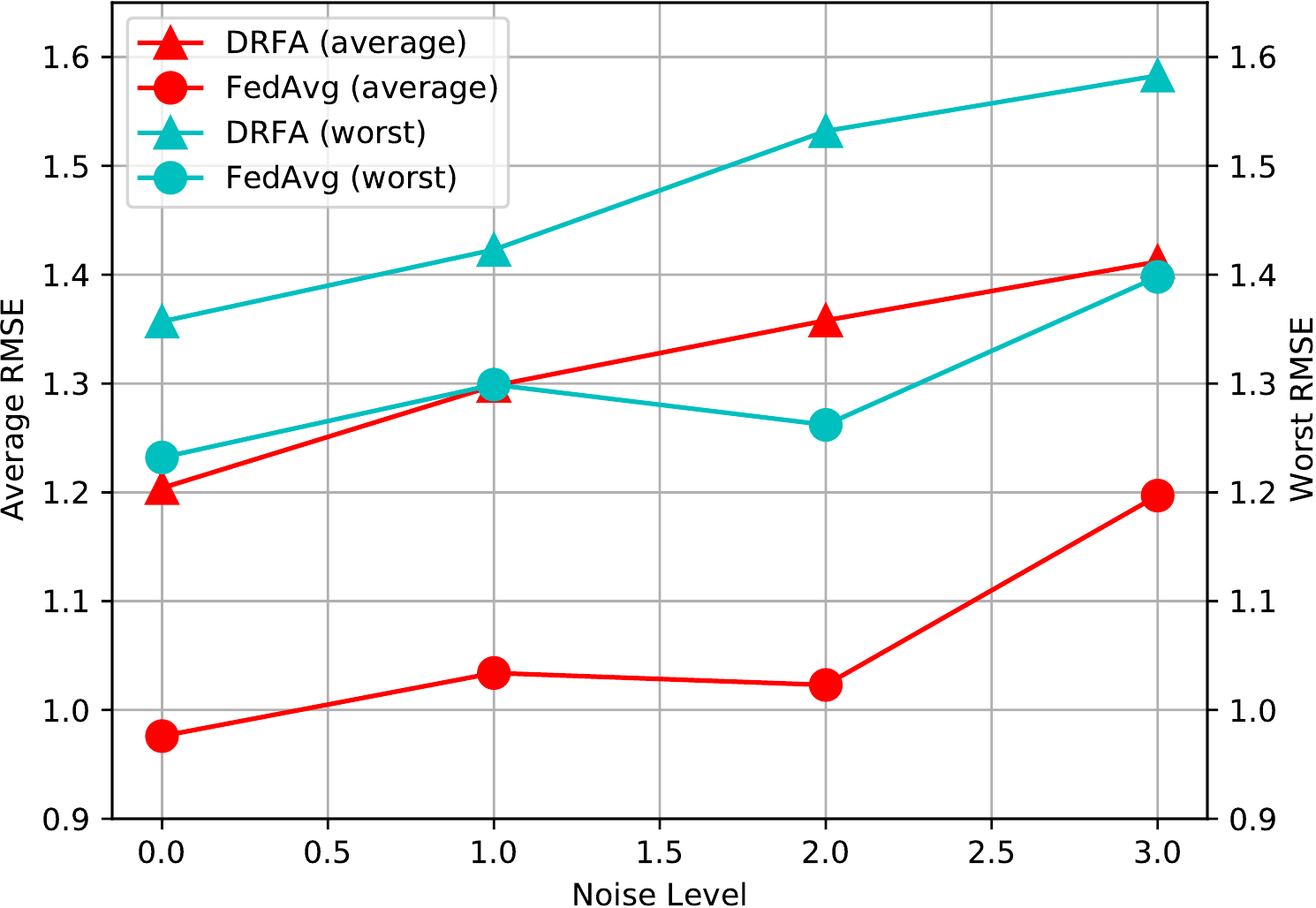}
    \caption{Average and worst-case client RMSE of  FedAvg \cite{fedavg} and DRFA \cite{deng_distributionally_2021} on a molecular property prediction task. }
    \vspace{-0.5cm}
    \label{fig:intro_exp}
\end{figure}

\section{Framework of \rfdd}
\subsection{Notation}
We start by fixing some notations that will be used throughout this paper. 
% 	% basic
% 	For a positive integer $n$, $[n]$ denotes the set $\{1,\cdots, n\}$. $\lvert A \rvert$ denotes the cardinality of the set $A$.
% 	%	The inner product of two vectors is written as $\langle x, y\rangle = \sum_{i=1}^d x_iy_i$.
% 	$\lVert \cdot \rVert_2$ is Euclidean norm.
% 	$W(i,j)$ denotes the element in the $i$-th row and $j$-th column of matrix $W$. 
% 	We denote $W>0$ if the matrix $W$ is symmetric and positive definite. We denote $I_d$ as the $d$-dimensional identity matrix.
% 	Let $\otimes$ denote the Kronecker product.
% 	%
% 	Let $B_r^d$ denote the $d$-dimensional ball with radius $r$ and $S^{d-1}_r$ denotes the $(d-1)$-dimensional sphere for the ball. 
% 	%
% 	Given a set $A$, Unif$(A)$ denote the uniform distribution over $A$.
% 	%
% 	For a tuple $(Z_{i,j})_{i\le N, j\le M}$ and $1\le k_1< k_2\le M$, we denote $Z_{i, k_1:k_2} = (Z_{i,k_1}, \cdots, Z_{i,k_2})$. 
% 	% Asymptotic
% 	We adopt the standard asymptotic notations: for two non-negative sequences $\{a_n\}$ and $\{b_n\}$, $\{a_n\}=O(\{b_n\})$ iff $\lim \sup_{n\rightarrow \infty}a_n/b_n<\infty$, $a_n = \Omega (b_n)$ iff $b_n = O(a_n)$, $a_n = \Theta(b_n)$ iff $a_n = O(b_n)$ and $b_n = O(a_n)$. We also write $\tilde{O}(\cdot)$, $\tilde{\Omega}(\cdot)$ and $\tilde{\Theta}(\cdot)$ to denote the respective meanings within multiplicative logarithmic factors in $n$.ß
we denote $[N] = \{1,2,\cdots,N\}$.
We consider federated learning setting where $N$ clients jointly train a central model with the i-th client's local training datasets of sample size $N_i$. The local dataset is denoted as $\mathcal{D}_i=\{\boldsymbol{z}_{i,j}\}_{j=1}^{N_i}$ and each sample $\boldsymbol{z}_{i,j}=(\boldsymbol{x}_{i,j}, y_{i,j})$ is i.i.d. drawn from an unknown local joint distribution $\mathcal{P}_{x,y}^{(i)}$.
We denote $Z^{\dagger}$ for any matrix $Z$ as the pseudo inverse of it.
% For a matrix $A$, we denote $s_{\min}(A)$ the minimal eigenvalue of $A.$

\subsection{Optimization Objective}
A common used approach in Federated learning (FL) to collaboratively train a central model $\boldsymbol{w}$ is to optimize the following objective:
\begin{equation}
\label{eq:FL}
    \min_{\boldsymbol{w}\in \mathcal{W}} F(\boldsymbol{w}) \coloneqq \sum_{i=1}^N \frac{N_i}{N} f_i(\boldsymbol{w})
\end{equation}
where $f_i(\boldsymbol{w})= \frac{1}{N_i} \sum_{\boldsymbol{z}_{i,j} \in \mathcal{D}_i} f(\boldsymbol{w}, \boldsymbol{z}_{i,j})$ is the local empirical risk for client $i$ with the global model. %$\boldsymbol{w}$, $N$ is the number of clients and $N_i$ is the sample size of the local dataset for client $i$.
The optimization of Equation \ref{eq:FL} suffers from non-iid data distribution, i.e., the heterogeneity across different clients' data collection \cite{deng_distributionally_2021}.
To provide an intuition, note the target objective in Equation \ref{eq:FL} aggregates client-specific loss functions with weights proportional to their local dataset sizes following the standard empirical risk minimization framework.
Imbalance sample sizes of different clients or diversity among local data distributions may inevitably plague the generalization ability of the central model obtained by solving Equation~\ref{eq:FL}. 
% A few notable studies attempt to address the this issue by personalizing the global model to local distributions \cite{deng2020adaptive, mansour2020three}.
To provide the performance guarantee over the worst-case local distribution,  Deng et al. \cite{deng_distributionally_2021} propose a framework named DRFA to minimize the worst-case combination loss of local empirical distributions:
\vspace{-1em}
\begin{equation}\label{eq: DRFA loss}
    \min_{\boldsymbol{w}\in \mathcal{W}} \max_{\boldsymbol{\lambda}\in \Lambda} F(\boldsymbol{w}, \boldsymbol{\lambda}) = \sum_{i=1}^N \lambda_i f_i(\boldsymbol{w})
\end{equation}
\vspace{-1.5em}
where $\boldsymbol{\lambda}\in \Lambda \coloneqq \{\boldsymbol{\lambda}\in \mathbb{R}^N_+, \sum_{i=1}^N \lambda_i = 1\}$ is the learnable sampling weights for each local client and $f_i(\cdot)$ is the local empirical distribution.

However, Equation~\ref{eq: DRFA loss} tends to pessimistically optimize the worst-case local distribution which can be easily affected by inter-client data noise.
As shown in Figure \ref{fig:intro_exp}, when we apply these methods to real-world drug discovery datasets,  the use of distributionally robust optimization cannot improve the worst-case local RMSE but even significantly reduces the performance.

%However, the issue that distributional robust optimization is prone to suffer low generalization and poor instability during training with outliers has been realized by several papers \cite{hu2018does, zhu2019generalized, zhai2021doro}.
%With incredibly large number of clients and heterogeneous local data collection procedure in FL, central model obtained by solving~\ref{eq: DRFA loss} can unlikely to escape from the same trap in practice. 
% While such method can alleviate the issue of data heterogeneity across clients, when local client's data distribution is noisy or distorted, it can perform badly \cite{zhai2021doro}. 
To mitigate the noise effect on the optimization procedure, instead of directly optimizing the worst-case local empirical distribution, 
%a naive way is to drop samples with large training loss,  
we propose to incorporate the popular data augmentation technique, mixup, into the local training procedure as follows:  

% We also introduce a regularizer on $\boldsymbol{\lambda}$ and propose the following optimization for global model
\vspace{-1.5em}
\begin{equation}
\label{eq: rfdd-optimization}
    \min_{\boldsymbol{w}\in W} \max_{\boldsymbol{\lambda}\in \Lambda} F(\boldsymbol{w}, \boldsymbol{\lambda}) \coloneqq \sum_{i=1}^N \lambda_i \tilde{f}_i(\boldsymbol{w}).
\end{equation}
Here we optimize the worst-case local empirical loss evaluated on the mixup distribution $\tilde{f}_i$.
To be specific, for the $i$-th client's local dataset, we denote $\tilde{\boldsymbol{x}}^{(i)}_{j,k}(\gamma) = \gamma g(\boldsymbol{x}_{i,j}) + (1-\gamma) g(\boldsymbol{x}_{i, k})$ and $\tilde{y}^{(i)}_{j,k}(\gamma) = \gamma y_{i,j} + (1-\gamma) y_{i, k}$ with $\gamma \in [0,1]$ be the linear interpolation between two samples in the local dataset $\mathcal{D}_i$ and $g:\mathbb{R}^d\rightarrow \mathbb{R}^d$ is a mapping to improve the flexibility of our framework to handle different kinds of data. 
We choose $g(\cdot)$ as an identity operator for the tableau or image data.
For graph prediction tasks including drug discovery, we choose $g(\cdot)$ as a graph neural network to map a graph $(V, E)$ to a global graph embedding in $\R^d$.
Let $\gamma$ follows the distribution as $\operatorname{Beta}(\alpha, \beta)$, $\tilde{\boldsymbol{z}}^{(i)}_{j,k}(\gamma)=(\tilde{\boldsymbol{x}}_{j,k}^{(i)}(\gamma), \tilde{y}_{j,k}^{(i)}(\gamma))$ and $\tilde{\mathcal{D}}_i(\gamma) = \{\tilde{\boldsymbol{z}}^{(i)}_{j,k}(\gamma), j,k\in [N_i]\}$ be the local mixup dataset by linear interpolating between every two samples.
After the construction of the mix-up empirical distribution, we evaluate each local loss function on the local mixup dataset, i.e.,  $\tilde{f}_i(\boldsymbol{w})=\frac{1}{N_i^2}\sum_{\tilde{\boldsymbol{z}}_{j,k}^{(i)}\in \tilde{\mathcal{D}}_i(\gamma)}f_i(\boldsymbol{w}, \tilde{\boldsymbol{z}}_{j,k}^{(i)})$ and then combine them with learnable sampling weights $\lambda_i$ as shown in Equation \ref{eq: rfdd-optimization}.
% Moreover, we add a regularizer $R(\cdot)$ to allow a wide range of choices such as KL-divergence, or $\ell_p$ norm to leverage the domain prior information.

% The strategy of mix-up follows Manifolds Mixup \cite{verma2019manifold} which mixups the representation rather than the original input. 
\subsection{Optimization Procedure}
We adopt a similar approach in \cite{deng_distributionally_2021} to solve the min-max optimization problem in Equation~\ref{eq: rfdd-optimization}. The core idea is to alternatively update $\boldsymbol{w}$ and $\boldsymbol{\lambda}$ combining with sample techniques to reduce communication cost.
We briefly introduce the main steps in Algorithm~\ref{algorithm:1} (more details can be found in the prior work \cite{deng_distributionally_2021}).
At the beginning of the $s$-th communication stage, the optimization procedure follows as:\vspace{-1em}
\begin{enumerate}
\item Sampling: the server independently selects two subsets of $[N]$, $C_s$ and $U_s$, both with size of a pre-define parameter $m$ and also samples a time index $t^{'}_s\sim U[0,\tau]$ with a pre-define parameter $\tau$ for updating $\boldsymbol{\lambda}$.
    The selection of $C_s$ follows the probability $\boldsymbol{\lambda}_t\in \R^N$ with replacement. 
    \item Broadcasting: the server broadcasts the central model to the selected clients.
    Then the client constructs a mixup dataset as discussed above and runs stochastic gradient descent (SGD) algorithm on the mixup dataset to update their local models.
    After that clients return their local parameter updates after time $t^{'}_s$ (used for updating $\boldsymbol{\lambda}$) and the last time $\tau$ (used for updating the global model).
    \item Updating $\bar{\boldsymbol{w}}_{s+1}$: the server takes an average over the received latest model parameters. To be specific, 
    $$
    \bar{\boldsymbol{w}}_{s+1} = \frac{1}{m}\sum_{j\in C_s} \w_{s,\tau}^{(j)}
    $$ 
    \item Updating $\boldsymbol{\lambda}_{s+1}$: the server takes an average $\tilde{\w}_{s+1} = \frac{1}{m} \sum_{j\in C_s}\w_{s, t^{'}_{s}}^{(j)}$ and then broadcasts it to all clients in $U_s$ and receives the stochastic gradients evaluated at this point to construct an auxiliary vector $\boldsymbol{v}_s$. Then the server updates $\boldsymbol{\lambda}$ as following with some pre-defined learning rate $\eta:$ 
    $$
    \boldsymbol{\lambda}_{s+1} = \prod_{\boldsymbol{\lambda}\in \Lambda}(\boldsymbol{\lambda}^{(s)}+\eta \tau \boldsymbol{v}_s).
    $$
\end{enumerate}
\vspace{-1em}

\begin{algorithm*}[tb]
   \caption{Dristributionally Robust  Federated Learning with Local Mixup  (DRFLM)}
   \label{algorithm:1}
\begin{multicols}{2}
\begin{algorithmic}
   \STATE \hskip-0.75em \textit{parameters:}
   \STATE synchronization gap $\tau$, total communication round $T$, $S = T/\tau$, sampling size $m$, mix-up ratio $\gamma$, initial model $\bar{\boldsymbol{w}}_0$ and initial weights $\boldsymbol{\lambda}_{0}$

   \medskip
   
   \STATE \hskip-0.75em {\bfseries function} \textsc{ClientUpdate}($i$, $\bar{\boldsymbol{w}}_s,t_s^{'}$)
        \STATE Initialize local weights $\boldsymbol{w}_{s,0}^i = \bar{\boldsymbol{w}}_s$
        \STATE Construct $\mathcal{\tilde{D}}_i(\gamma) = \{\gamma g(\boldsymbol{z}_1)+(1-\gamma)g(\boldsymbol{z}_2)\lvert \boldsymbol{z}_1, \boldsymbol{z}_2\in \mathcal{D}_i\}$
        \FOR{$t$ = $1$ to $\tau$}
            \STATE Uniformly sample $\tilde{\boldsymbol{z}}_{t}^{(i)}$ from $\mathcal{\tilde{D}}_i(\gamma)$
            \STATE  $\boldsymbol{w}_{s,t}^{(i)} = \prod_{\mathcal{W}}\left ( \boldsymbol{w}_{s,t-1}^{(i)} - \eta\nabla f(\boldsymbol{w}_{s,t-1}^{(i)};\tilde{\boldsymbol{z}}_{t}^{(i)})\right)$
            %\STATE Update $(\theta_i, w_i)$ using back-propagation
        \ENDFOR
    \STATE return ($\w_{s, \tau}^{(j)}, \w_{s, t^{'}_s}^{(j)}$)
    \medskip
 
   \STATE \hskip-0.75em {\bfseries Server executes:}
   \STATE initialize $\bar{\boldsymbol{w}}_0$
   \FOR{$s=1$ {\bfseries to} $S$}
   \STATE Sample a set $C_s\subset [N]$ with size of m and probability $\boldsymbol{\lambda}_s$ with replacement
   \STATE Uniformly sample $t^{'}_{s}$ from $[\tau]$ uniformly 
   \STATE Uniformly sample a set $U_s\subset [N]$ with size of m
%   \STATE Broadcasts $\bar{w}_s$ and $t^{'}_s$ to all clients $i\in C_s$
   \FOR{client $j\in C_s$ {\bfseries in parallel}}
   \STATE $(\w_{s, \tau}^{(j)}, \w_{s, t^{'}_s}^{(j)}) = \textsc{ClientUpdate}(j, \bar{\boldsymbol{w}}_s, t_s^{'})$
   \ENDFOR
   \STATE $\bar{\w}_{s+1} = \frac{1}{m} \sum_{j\in C_s} \w_{s,\tau}^{(j)}$\\
   \STATE $\tilde{\w}_{s+1} = \frac{1}{m} \sum_{j\in C_s} \w_{s, t^{'}_s}^{(j)}$\\
   \STATE Broadcasts $\tilde{\w}_{s+1}$ to all clients $i\in U_s$ and receive $f_i(\tilde{\w}_{s+1}; \tilde{\boldsymbol{z}}_{t^{'}_s}^{(i)})$ from local client
   \STATE Construct $\boldsymbol{v}\in \mathbb{R}^N$ with $v_i=\frac{N}{m}f_i(\tilde{\w}_{s+1}; \tilde{\boldsymbol{z}}_{t^{'}_s}^{(i)})$ for each indice $i\in U_s$ and $v_i=0$ for $i \notin U_s$
   \STATE Updates $\boldsymbol{\lambda}_{s+1} = \prod_{ \Lambda}(\boldsymbol{\lambda}_{s}+\eta \tau \boldsymbol{v}_s)$
   \ENDFOR

   \STATE Output $\bar{\boldsymbol{w}}_S,\boldsymbol{\lambda}_S$
   
\end{algorithmic}
\end{multicols}
\end{algorithm*}

\section{Theoretical Analysis}\label{sec: theory section}
In this part, we  provide 
different theoretical views to understand our approach in Algorithm \ref{algorithm:1}.
\subsection{Theoretical Motivation}
We first show a concrete example to demonstrate the effectiveness of our method in mitigating the local noise effect. Specifically, we construct an example as follows: 
%The following example illustrates that DRFA can be more vulnerable than naive FedAvg to local client noise: 
\begin{example}\label{example: exmp1}
We consider FL setting with two clients. These two clients jointly train  a linear classifier: $\phi(\x;\w) =  \text{sgn}(\w^T\x)$ over the global dataset $(\x,y)$ generated by the joint distribution $P(\x,y)$ as following: \begin{align*}
    &\x\sim P_X:=\text{Uniform}\bigg\{[-2,-1]^{d}\bigcup \big([1,2]\times [-2,-1]^{d-1}\big)  \bigg\}  \\ 
    &y=\left\{\begin{matrix} 1,& x_1 >0\\
    -1,&x_1\leq 0\end{matrix}\right.. 
\end{align*} Given the above global joint distribution, the local distribution of each client is defined as following:
\vspace{-1em}
\begin{itemize} 
    \item The "clean" dataset $D_1=\{(\x_{1,i},y_{1,i})\}_{i=1}^{N_1}\overset{i.i.d.}{\sim} P(\x,y)$ of the first client is noiseless.
    \item The "noisy" dataset  $\tilde{D}_2 =\{(\x_{2,i},\tilde{y}_{2,i})\}_{i=1}^{N_2} \overset{i.i.d.}{\sim}\tilde{P}(\x,\tilde{y})$ ($\tilde{y}$ is a noisy version of $y$ defined as follow): \begin{align*}
        &{\tilde{y}}_{2,i} = \left\{\begin{matrix} 
            y_{2,i}, & x_1\notin I\\
            \text{Flipping}(y_{2,i};p_1), & x_1\in I
        \end{matrix}\right.
        \end{align*}
    with $I = [-1-p_2,-1],1/2<p_1<1, 0<p_2<1$ and $\text{Flipping}(y;p_1)$ the random function that flips $y$ to $-y$ with probability $p_1.$ We construct $\tilde{D}_2$ as a noisy version of $D_2$ to simulate the local noise effect on the  training procedure.
    \end{itemize} 
    \vspace{-1em}
 Suppose $N_1 = N_2 = \tilde{N}$, then given the 0-1 loss $f(\w;\x,y): = \bm1\{y \neq \phi(\x;\w) \} $ and the learned models $\w^\text{Avg}_{\tilde{N}},\w^\text{DRFA}_{\tilde{N}},\w^\text{\rfdd}_{\tilde{N}}$ obtained by minimizing the empirical FedAvg loss (Equation \ref{eq:FL}), DRFA loss (Equation \ref{eq: DRFA loss}) and \rfdd loss (Equation \ref{eq: rfdd-optimization}) associated with $\ell$. We then have~\footnote{here the mix-up parameter is specified as $\alpha = \beta $}
    \begin{align*}
       &\lim_{\tilde{N}\to\infty} \E_{P(\x,y)}[f(\w^\text{DRFA}_{\tilde{N}},\x,y)] \\
       >&\lim_{\tilde{N}\to\infty} \E_{P(\x,y)}[ f(\w^\text{Avg}_{\tilde{N}};\x,y)]\\
       =&\lim_{\tilde{N}\to\infty}  \E_{P(\x,y)}[f(\w^\text{\rfdd}_{\tilde{N}};\x,y)] = 0.
    \end{align*}
%See figure~\ref{fig:visulization} as the visulization.
\end{example}
% \begin{figure}[htbp]
%     \centering
%     \includegraphics[width=0.4\textwidth]{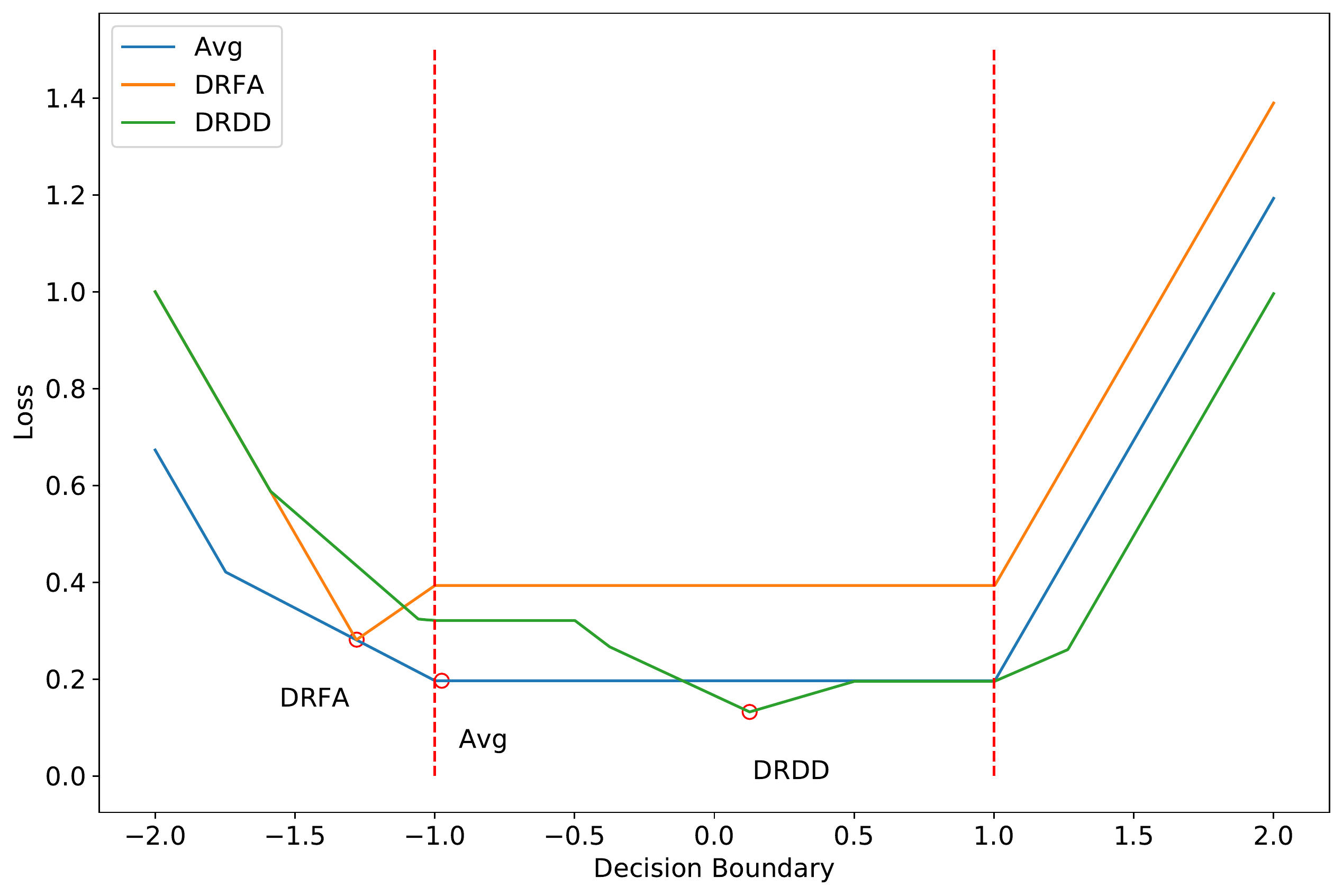}
%     \caption{Visulization of the example}
%     \label{fig:visulization}
% \end{figure}
%\yx{CANDO: Compare the Empirical Loss Landscape of DRFA loss before/after mixup.}
As shown by the above inequality, under certain conditions, with the increase of training sample number,  losses of FedAvg and our method can gradually decrease to zero while the loss of DRFA cannot converge to zero even with infinity samples. 
We have two insights from the above example: (1)
Directly applying DRO method to FL settings can be easily affected by the local noise. (2) The use of local data mixup has the potential to mitigate the local noise.

% \yx{I have also prepared an example on DRFA = RFDD $>$ FEDAVG, but I wonder whether this example can illustrate the point that RFDD can maintain cross-client robustness. Maybe the theorem I claimed in  next subsection can illsturate this point better. }

\subsection{Generalization Guarantee of \rfdd}
% \yx{Contributions of new result: \begin{enumerate}
% 	\item Characterizing the role of mix-up regularization when taking maximal over $\lambda$, which is not a straightforward extension of classical setting, and can be further explored in more general settings.
% 	\item Relating the mix-up regularization effect to a new heterogeneous-dependent term $H_j$.  
% \end{enumerate} }
In previous subsection, we have demonstrated a motivated example to show the effectiveness of local data mixup. 
In this part, we focus on analyzing the
generalization bound of the worst-case client loss  in the setting where a generalized linear model is built using our approach.
Here, we take generalized linear model as an example and our analysis can be also extended to more complex hypothesis sets such as two-layer neural networks (see more general results in the Appendix~\ref{appendix:non-linear}): 
\begin{equation}\label{eq: GLM density}
P({y}\lvert \x;\w) \propto \exp({y}\w^T\x- \mu (\w^T\x) )
\end{equation}
with $\x,\w\in \mathbb{R}^d$ and $\mu(\cdot)$ is some twice differentiable function.  
The corresponding negative log likelihood $f$ is: \begin{align}
    f(\w;\x, y) = \mu (\w^T\x)-y\w^T\x. 
\end{align}
Typical losses such as cross entropy can be obtained by setting different $\mu$.
Differentiate $\int P(y\lvert \x;\w)dy = 1$, we get $\E[y\lvert \x,\w] = \mu'(\w^T \x)$, thus Equation \eqref{eq: GLM density} can be write equivalently as:
\begin{equation}\label{eq: GLM equ form}
	y  = \mu'(\w^T \x)+ \varepsilon,
\end{equation}
with $\varepsilon$ as a random variable with zero mean. We further assume that for $\lvert z\rvert \leq 1 $ , there exists some $K$ such that  \begin{equation}\label{assumption: on GLM K}K^{-1} \leq \mu''(z)\leq K.
\end{equation} 
%\yx{This condition implies the condition needed in first inequality of Theorem 1, we will provide examples on $K$ in Linear case and Logistic case later} . 
In particular, the linear model and the Logistic model satisfies our conditions. Here, we take Linear model for regression as an example: When we set $\mu (z) = \dfrac{1}{2}z^2,$  Equation \eqref{eq: GLM equ form} turns to be $y = \w^T \x + \varepsilon$, and $f(\w;\x,y) = \dfrac{1}{2}(y-\w^T \x)^2+ \dfrac{1}{2}y^2 $
corresponds to the squared loss. By $\mu ''(z) = 1$, we have $K = 1$ satisfies Equation \eqref{assumption: on GLM K}.
%(The example of Logistic regression is shown in the appendix).
% \begin{example}[Logistic Model]
% When $y$ is supported in $\{0,1\}$ and  $\mu(z) = \log(1 + e^z)$, we have \eqref{eq: GLM equ form} turns to \begin{align*} &P(y = 0;\w,x) = \dfrac{1}{1+\exp(\w^T x)},\\ 
% &P(y = 1;\w,x) =  \dfrac{\exp(\w^T x)}{1+\exp(\w^T x)}  \end{align*}
% and \begin{align*}
% 	f(\w;x,y) = -y \log P(y;\w,x)- (1-y) \log P(1 - y;\w,x)
% \end{align*}
% corresponds to the cross-entropy loss. By $\mu ''(z) = e^z/(1+e^z)^2$, we have $K =  e^2/(1+e^2)^2$ satisfies the assumption\eqref{assumption: on GLM K}.
% \end{example}

At a high level, our theory aims to characterize
the difference between the population and the empirical (worst-case) losses (i.e., generalization bound of the worst-case loss). The derivation of the generalization bound follows two steps:  
\vspace{-1em}
\begin{itemize}
    \item Firstly, we would show that as long as the estimator $\w$ lies in some constrained hypothesis class $\mathcal{W}$, then the generalization ability of $\w$ over the worst-case loss is theoretically guaranteed.
    \item Then, we argue that the mix-up effect is nearly a penalty that forces the trained estimator fall into the above hypothesis class $\mathcal{W}$.
\end{itemize}
\vspace{-1em}
\noindent\textbf{Step1:}  Without loss of generality, we denote the $i$-th local dataset as $\Dset_i =\{\z_{i,j} = (\x_{i,j},y_{i,j})\}_{j=1}^{N_i}$ and assumes $\lVert \x_{i,j}\rVert_2, \lVert \w^*\rVert_2\leq 1.$  We  denote \begin{align*} &\hat{\Sigma}_i: = \dfrac{1}{N_i}\sum_{j=1}^{N_i}\x_{i,j}\x_{i,j}^T,\quad \zeta_i(\w): = \dfrac{1}{N_i}\sum_{j=1}^{N_i}\mu{''}(\w^T \x_{i,j}),\\
&\Sigma_\Lambda : = \{\sum_{j=1}^N \lambda_i \hat{\Sigma}_j :\lambda \in \Lambda  \}. 
\end{align*}
Here $\Sigma_i$ and $\hat{\Sigma}_i$ denote the population and empirical covariance matrix of the $i$-th client, respectively.
We further introduce the following hypothesis class, which can be seen as a federated-setting version of the hypothesis set considered in \cite{zhang_how_2021}:
\begin{align*}
        \mathcal{W}_r: = \{ \x\to\w^T\x: \min_{\Sigma \in \Sigma_\Lambda}  \w^T\Sigma \w \leq r \}.
\end{align*}
In particular, the above hypothesis class involves a quadratic form $\w^T \bar{\Sigma}(\w) \w$, with $$\bar\Sigma(\w): =  \text{argmin}_{\Sigma_{\Lambda}}\{ \sum_{i=1}^N \lambda_i \w^T \Sigma_i\w \},
$$
the weighted average of the data covariance matrices across different clients. %In particular, we have $\text{rank}( \bar\Sigma_\lambda)\leq \text{rank}(\bar\Sigma): = \text{rank}(\sum_{i=1}^N  \Sigma_i) $. 

In fact, we will show later that $\mathcal{W}_r$ reflect the effect of mix-up comparing with DRFA and FedAvg: Without the mix-up effect, the only information we can infer about the empirical minimizer of DRFA and FedAvg is that $ \lVert \w\rVert_2\leq 1$, while the \rfdd\ loss will push its minimizer into $\mathcal{W}_r$ due to the mix-up effect, as we will show in Step2. This will provide a more tight generalization bound in most cases.
% so in Theorem~\ref{thm: Generalization guarantee} , the generalization bound will involve $\sqrt{\dfrac{d}{n}}$ term instead of $\sqrt{\text{rank}(\tilde{\Sigma})/n}$ term. 

Now we can show the following generalization bound over the hypothesis class defined above: 
\begin{theorem}[Generalization Bound of $\mathcal{W}_r$]\label{thm: Generalization guarantee} Suppose $\mu(\cdot)$ is $L$-Lipshitz continuous.
% and \begin{equation}\label{eq: assumption rho}
%         \E_{\x\sim P_j}[\mu''(\x^T\w)]^2 \geq \rho\cdot\min\{1,\E_{\x\sim P_j}[(\w^T\x)^2] \},\quad j \in[N].
% \end{equation} \yx{This condition is covered by the assumption when introducing GLM, so will be deleted.} 
Then there exists constants $B>0$ such that  for  $\w\in\mathcal{W}_r$, the following inequality holds  with probability $1-\delta$:
\begin{align*} 
&\max_{\lambda\in\Lambda}    \sum_{j=1}^N\lambda_j\E_{P_j(\x,y)}[f(\w;\x,y)]\\
&\leq  \max_{\lambda\in\Lambda} \sum_{j=1}^N \dfrac{\lambda_j}{N_j}\sum_{i=1}^{N_j} f(\w;x_j^i,y_j^i)\\
& + \max_{\lambda\in\Lambda}\sum_{j=1}^N \lambda_j\sqrt{\dfrac{1}{N_j} }\big(\sqrt{\log\dfrac{N}{\delta} }+\sum_{j=1}^N rL\cdot \textcolor{red}{\sqrt{H_j/N_j}} \big),
\end{align*}
with $H_j: = \E_{P_j}[ \max_{\Sigma \in \Sigma_\Lambda} \text{tr}\big(\Sigma^{\dagger}\hat{\Sigma}_j\big)] 
$ .
\end{theorem}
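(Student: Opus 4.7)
The plan is to prove the bound by the standard symmetrization-plus-Rademacher route, adapted to the worst-case weighted loss and to the structured linear hypothesis class $\mathcal{W}_r$. The argument splits naturally into a concentration piece producing the $\sqrt{\log(N/\delta)/N_j}$ factor and a complexity piece producing the $rL\sqrt{H_j/N_j}$ factor.

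First I would reduce the worst-case generalization gap to a per-client uniform convergence statement. Subadditivity $\max_\lambda a(\lambda)-\max_\lambda b(\lambda)\leq\max_\lambda(a(\lambda)-b(\lambda))$ together with $\lambda_j\geq 0$ gives
\[
\max_\lambda\sum_j\lambda_j\E_{P_j}[f]-\max_\lambda\sum_j\lambda_j\hat f_j(\w)\leq\max_\lambda\sum_j\lambda_j\sup_{\w'\in\mathcal{W}_r}\!\big(\E_{P_j}[f(\w';\x,y)]-\hat f_j(\w')\big).
\]
A standard bounded-differences / symmetrization argument applied separately to each client $j$ yields, with probability at least $1-\delta/N$, $\sup_{\w\in\mathcal{W}_r}(\E_{P_j}[f]-\hat f_j(\w))\leq 2\mathfrak{R}_{N_j}(f\circ\mathcal{W}_r)+c\sqrt{\log(N/\delta)/N_j}$, and a union bound over the $N$ clients produces the $\sqrt{\log(N/\delta)}$ term in the statement.

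Next I would reduce to the Rademacher complexity of the linear class $\mathcal{W}_r$. Because $f(\w;\x,y)=\mu(\w^T\x)-y\w^T\x$ with $\mu$ $L$-Lipschitz on the bounded range of $\w^T\x$ and $|y|\leq 1$, Talagrand's contraction lemma gives $\mathfrak{R}_{N_j}(f\circ\mathcal{W}_r)\lesssim L\,\mathfrak{R}_{N_j}(\mathcal{W}_r)$. To bound the latter I would exploit the defining constraint $\min_{\Sigma\in\Sigma_\Lambda}\w^T\Sigma\w\leq r$: for any $\Sigma$ witnessing the minimum, Cauchy--Schwarz in the $\Sigma$-norm gives $\w^T v\leq\sqrt{r\cdot v^T\Sigma^\dagger v}$. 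Setting $v_\sigma=\sum_i\sigma_i\x_{j,i}$ and optimizing over $\Sigma$ inside the sup,
\[
\mathfrak{R}_{N_j}(\mathcal{W}_r)\leq\frac{\sqrt r}{N_j}\E_\sigma\sup_{\Sigma\in\Sigma_\Lambda}\sqrt{v_\sigma^T\Sigma^\dagger v_\sigma}\leq\sqrt{\frac{r}{N_j^2}\,\E_\sigma\sup_{\Sigma}\text{tr}\bigl(\Sigma^\dagger v_\sigma v_\sigma^T\bigr)}
\]
by Jensen on the concave square root. Since $\E_\sigma[v_\sigma v_\sigma^T]=N_j\hat\Sigma_j$, a further expectation $\E_{P_j}$ over the client sample lets $\max_\Sigma\text{tr}(\Sigma^\dagger\hat\Sigma_j)$ surface, producing exactly $H_j$ and an $L\sqrt{rH_j/N_j}$ bound; residual numerical factors (including a stray $\sqrt r$) are absorbed in the universal constant $B$ advertised in the statement.

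The main obstacle I anticipate is the last commutation between $\sup_{\Sigma\in\Sigma_\Lambda}$ and $\E_\sigma$ (and subsequently $\E_{P_j}$). Because $\Sigma_\Lambda$ is an infinite, data-dependent, convex subset of PSD matrices, the two operations cannot be swapped trivially and $\E\sup\geq\sup\E$ goes the wrong direction. My preferred route is to push the sup outside via Jensen on the concave square root so that only a single linear functional $\text{tr}(\Sigma^\dagger(\cdot))$ remains under the sup; the Rademacher randomness can then be integrated out cleanly and the resulting data-dependent quantity $\max_\Sigma\text{tr}(\Sigma^\dagger\hat\Sigma_j)$ is repackaged as $H_j$ under $\E_{P_j}$. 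A fallback, should this step lose too much, is a covering/chaining argument over $\Sigma_\Lambda$, a bounded convex subset of PSD matrices that admits polynomial metric-entropy bounds. The remainder is routine bookkeeping: reassembling the concentration and complexity pieces under the max over $\lambda$ and multiplying through by $\lambda_j$ yields the stated inequality.
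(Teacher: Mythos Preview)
Your plan is essentially the paper's own proof: a per-client Bartlett-type Rademacher bound combined by a union bound over the $N$ clients, Talagrand contraction to strip the $L$-Lipschitz loss, and then Cauchy--Schwarz in the $\Sigma$-seminorm followed by Jensen on the square root to bound $\mathfrak{R}_{N_j}(\mathcal{W}_r)$ by $r\sqrt{H_j/N_j}$. The exchange of $\E_\varepsilon$ and $\sup_{\Sigma\in\Sigma_\Lambda}$ that you single out as the main obstacle is carried out in the paper's proof as well, without further justification, so your proposal matches the paper's argument (and its level of rigor) on that point.
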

 $H_j$ can partially reflects the client heterogeneity from the global distribution. In the IID case, the red part in above inequality turns to be $O(\sqrt{\text{rank}(\Sigma)/N_j})$. 
 In contrast, in both FedAvg and DRFA, this term will be $\Omega(\sqrt{d/N_j})$. 
 Thus our bound can be more tight when the intrinsic dimension of data is small (i.e., $\text{rank}(\Sigma)\ll d$).

%In comparison, if we only consider the hypothesis class of  $ \lVert \w\rVert_2\leq 1$ induced by  DRFA and FedAvg, the "$H_j$" term (marked with red color) in above inequality turns to be $\Omega(\sqrt{d/n})$. Thus our bound can be more tight when the intrinsic dimension of data is small as pointed out in the prior work \cite{zhang_how_2021}.
%\yx{To find some intuition on $H_j$, notice that as $n_j \to\infty$ we have $H_j \to \max_{\Sigma \in \Sigma_\Lambda} \text{tr}\big(\Sigma^{\dagger}{\Sigma}_j\big)$, i.e. the term $H_j$ reflects the heterogeneity: In best case it scales at $\text{rank}(\Sigma_j),$ in worst case it scales at $d$, which is same as no-regularization case. And we can draw a generalization bound on this quantity now.}

%$H_j$ reflects the distribution deviation 
%intrinsic low-rank structure on the mixed distribution. 

%may grows to the worst case $d$ when the  data distribution is highly heterogeneous across different clients, the term can be reasonable smaller than $d$  when the data distribution is homogeneous across different clients because the low-rank structure can be preserved after mixup.

% \begin{remark}
% In fact, the assumption \eqref{eq: assumption rho} is satisfied as only as the underlying parameter $\w^*$ is bounded.
% \end{remark}    

\noindent\textbf{Step2: }Now we 
prove that the learned model obtained by our approach can fall into $\mathcal{W}_r(\lambda)$ due to the mixup effect. Employing similar techniques as in \cite{zhang_how_2021}, we can show that the object function of \rfdd\ turns approximately to \begin{align}\label{eq: FedAvg with localr}
    \min_{\w \in W}\max_{\lambda\in \Lambda} \sum_{i=1}^N \lambda_i ({f}_i(\w)+   \underbrace{\dfrac{c}{2} \zeta_i(\w) \w^T\hat\Sigma_i \w   }_{R_i(\w)} ).
\end{align}
Then for any possible $\blambda \in \Lambda$ the mixed penalty term $ R(\w):=  \sum_{i=1}^N\lambda_i R_i(\w) $ forces the estimator to enter the set $\mathcal{W}_r$ for some $r$.
The formal version of the proof is refereed to Appendix~\ref{thm: Generalization guarantee}.
% For $\mathcal{W}_\gamma(\Lambda)$, we have \begin{align*}
%     \text{Rad}_n(\mathcal{W}_\gamma(\Lambda) ) \leq \dfrac{\max\{(\gamma/\rho)^{1/2},(\gamma/\rho)^{1/4} \}}{c(\Lambda)\sqrt{n}}
% \end{align*}

% \begin{lemma}
% Suppose client $j$ has the centralized dataset $\{(x_i,y_i)\}_{i=1}^n$ such that $1/n_j\sum_{i=1}^n x_i = 0$ and denote $\hat{\Sigma}_j = \dfrac{1}{n_j}\sum_{i=1}^n x_ix_i^T.$ Then if $\mu(\cdot)$ is twice differentiable,  the second order approximation of $\tilde{f}_j(\w) $ is given by \begin{align}
%   &\hat{f}_j(\w;D_j): =\\
%   &\dfrac{1}{n_j}\sum_{i=1}^{n_j}{f}(\w;x_i,y_i) +c(\gamma) \dfrac{1}{2n_j} \big[\sum_{i=1}^{n_j} \mu''(\w^Tx_i) \big]\w^T\hat{\Sigma}_X\w. 
% \end{align}
% with $c(\gamma) = \E_{D_\gamma}[\dfrac{(1-\gamma)^2}{\gamma^2} ], D_\gamma = \dfrac{\alpha}{\alpha+\beta}\text{Beta}(\alpha+1,\beta) +\dfrac{\alpha}{\alpha+\beta} \text{Beta}(\beta+1,\alpha). $
% \end{lemma}

% In particular, the mix-up effect converts to the penalty term $$R_j(\w): =c_j(\gamma) \dfrac{1}{2n_j} \big[\sum_{i=1}^{n_j} \mu''(\w^Tx_i) \big]\w^T\hat{\Sigma}_j\w. $$

\subsection{Convergence analysis}
In fact, we can establish similar convergence result for our algorithm \ref{algorithm:1} under non-convex loss functions by following the proof of Theorem~4 in \cite{deng_distributionally_2021} and show that our algorithm can efficiently converge to a stationary point of empirical loss.
Following the standard analysis of the nonconvex minimax optimization, we define $\blambda^{*}(\w)=\operatorname{argmax}_{\blambda\in \boldsymbol{\Lambda}}F(\w, \blambda)$ and $\Phi(\w)=F(\w, \blambda^{*}(\w))$.
To derive meaningful results, the convergence measure concerns the so-called Moreau envelope of $\Phi$ \cite{moreau1965proximite, davis2019stochastic}.
\begin{definition}[Moreau Envelope]
A function $\Phi_p(\x)$ is a $p$-Moreau envelope of a function $\Phi$ if $\Phi_p(x) = \min_{\w\in \mathcal{W}}\{\Phi(\w)+\frac{1}{2p}\lVert \w -\x \rVert^2\}$.
\end{definition}
We use $\frac{1}{2L}$-Moreau envelope of $\Phi$ and the proof is similar to Theorem 2~\cite{deng_distributionally_2021} except we apply it on the mixup distribution. 
For self-containation, the proof is given in Appendix~\ref{sec:optimization-guarantee}:
\begin{theorem}
\label{thm:opt}
    Under the assumptions 1-4 in Appendix~\ref{sec:optimization-guarantee} and  assume the sample space $\mathcal{Z}$ is a convex set, each local function $f_i$ is nonconvex, global function $F$ takes the form of our formulation~\eqref{eq: rfdd-optimization}, running Algorithm~\ref{algorithm:1} with the choice of  $\tau=T^{1/4}$, $\eta=\frac{1}{4LT^{3/4}}$ and $\gamma=\frac{1}{\sqrt{T}}$ we have,
\begin{align*}
&\frac{1}{T} \sum_{t=1}^{T} \mathbb{E}\left[\left\|\nabla \Phi_{1 /(2 L)}\left(\boldsymbol{w}^{t}\right)\right\|^{2}\right]\\ \leq &O\Bigg(\frac{D_{\Lambda}^{2}}{T^{1 / 8}}+\frac{\sigma_{\lambda}^{2}}{m T^{1 / 4}}+\frac{G_{\lambda}^{2}}{T^{1 / 4}}\\
&+\frac{G_{w} \sqrt{G_{w}^{2}+\sigma_{w}^{2}}}{T^{1 / 8}}+\frac{D_{\mathcal{W}}\left(\sigma_{w}+\sqrt{\Gamma}\right)}{T^{1 / 2}}\Bigg).
\end{align*}
\end{theorem}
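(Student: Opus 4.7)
The plan is to adapt the nonconvex--concave min-max convergence analysis of DRFA \cite{deng_distributionally_2021} to the mixup-augmented objective in Equation~\ref{eq: rfdd-optimization}. At a high level, once we verify that each local mixup objective $\tilde{f}_i$ inherits the smoothness, bounded-gradient, and bounded-variance properties assumed on $f_i$, the structure of the argument is essentially unchanged. The key enabling observation is that $\tilde{f}_i(\w) = \mathbb{E}_{\gamma,\z_1,\z_2}[f(\w;\tilde{\z})]$ is an expectation over a convex combination of two base samples with a Beta-distributed interpolation coefficient; since $\mathcal{Z}$ is convex, mixup samples stay in $\mathcal{Z}$, and standard closure properties transfer the assumed regularity from $f$ to $\tilde f_i$. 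This reduces the task to bookkeeping within the DRFA template.

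First I would introduce the Moreau envelope $\Phi_{1/(2L)}(\w) = \min_{\w'\in\mathcal{W}}\{\Phi(\w')+L\lVert\w'-\w\rVert^2\}$ and recall that a bound on $\lVert\nabla \Phi_{1/(2L)}(\w)\rVert$ furnishes a meaningful near-stationarity notion for the weakly convex function $\Phi(\w)=F(\w,\blambda^{*}(\w))$. I would then perform the one-step primal analysis: plug the proximal point $\hat{\w}_s := \arg\min_{\w\in\mathcal{W}}\{\Phi(\w)+L\lVert\w-\bar\w_s\rVert^2\}$ into the expansion of $\mathbb{E}\lVert\hat{\w}_s-\bar\w_{s+1}\rVert^2$, unrolling the $\tau$ local SGD steps on the mixup loss for each sampled client in $C_s$. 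This expansion produces four error terms: the stochastic-gradient variance on $\tilde f_i$ controlled by $\sigma_w$; the client drift across the $\tau$ local steps, bounded in the usual way by $O(\eta^2 \tau^2 (G_w^2+\sigma_w^2))$; the client-sampling error from drawing $C_s$ according to $\blambda_s$, controlled by the heterogeneity $\Gamma$ and the size $m$; and a duality gap term measuring the lag between $\blambda_s$ and $\blambda^{*}(\bar\w_s)$.

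Second I would analyze the dual update. Because $F(\w,\blambda)$ is linear (hence concave) in $\blambda$, a standard online projected-gradient argument on $\blambda$ with the stochastic estimator $\boldsymbol{v}_s$ (scaled by $N/m$ to make it unbiased with respect to uniform sampling in $U_s$) bounds $\sum_s [F(\bar\w_s,\blambda^{*}(\bar\w_s))-F(\bar\w_s,\blambda_s)]$ by a quantity of the form $D_\Lambda^2/(\gamma \tau T) + \gamma\tau(G_\lambda^2+\sigma_\lambda^2/m)$. The only subtlety relative to DRFA is that $\boldsymbol{v}_s$ is evaluated at a mixup sample drawn at the uniformly random index $t_s'\in[\tau]$; uniformity of $t_s'$ together with the i.i.d.\ construction of $\tilde{\mathcal{D}}_i(\gamma)$ inside each client preserves unbiasedness and only affects constants in the variance. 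Combining the primal and dual bounds, invoking weak convexity of $\Phi$ to translate average-iterate guarantees into a rate for $\lVert\nabla \Phi_{1/(2L)}(\bar\w_s)\rVert^2$, and substituting $\tau=T^{1/4}$, $\eta=1/(4LT^{3/4})$, and the dual step size $\gamma=1/\sqrt{T}$ recovers the stated bound.

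I expect the main obstacle to be simultaneously controlling the client-drift term and the mixup-induced variance, since both enter multiplicatively with $\tau$ in the primal one-step descent. The delicate point is to show that the mixup expectation does not inflate the effective variance by more than a constant factor --- so that the familiar choice $\tau=T^{1/4}$ still balances drift against stochastic noise --- and to verify that the dependence of $\tilde f_i$ on pairs $(\z_{i,j},\z_{i,k})$ does not break the independence assumed in the client-sampling variance bound. Both points genuinely use the convexity of $\mathcal{Z}$ and the i.i.d.\ draw of mixup pairs, and this is where the argument departs most from a direct transcription of Theorem~2 in \cite{deng_distributionally_2021}.
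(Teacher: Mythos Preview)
Your proposal is correct and follows essentially the same route as the paper: both reduce to the DRFA nonconvex--concave analysis after observing that convexity of $\mathcal{Z}$ keeps mixup samples inside $\mathcal{Z}$, so the regularity assumptions transfer to $\tilde f_i$. The paper's proof simply cites three lemmas from \cite{deng_distributionally_2021} (one-iteration descent for the Moreau envelope, a bounded local-drift lemma, and a duality-gap lemma for the $\blambda$ update) and plugs in the stated step sizes; your proposal unpacks the content of those lemmas rather than citing them, but the decomposition into primal drift, variance, client-sampling error, and duality gap is identical.
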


% Moreover, we generalized our algorithm to include hard-constraints and mirror descent case and recover the same convergence rate as in the SGD.
% \begin{theorem}
%     Under similar assumptions and further assume each local function $f_i$ is convex, global function $F$ takes the form of our formulation~\eqref{eq: rfdd-optimization}, we have 
% \begin{align*}
%     \max _{\boldsymbol{\lambda} \in \Lambda} \mathbb{E}[F(\hat{\boldsymbol{w}}, \boldsymbol{\lambda})]-\min _{\boldsymbol{w} \in \mathcal{W}} \mathbb{E}[F(\boldsymbol{w}, \hat{\boldsymbol{\lambda}})] = O(\frac{1}{T^{3/8}}).
% \end{align*}
% \end{theorem}
\section{Experiments}
\label{sec:exp}
\begin{table*}[]
\centering
\caption{The average and worst-case RMSE of client test datasets (lower is better).}
\label{tab:overall_results}
\begin{tabular}{clllllll}
\hline
\multirow{2}{*}{Method}                     & \multicolumn{1}{c}{\multirow{2}{*}{Model}} & \multicolumn{2}{c}{ESOL}                 & \multicolumn{2}{c}{FreeSolv}             & \multicolumn{2}{c}{Drug Activity}             \\ \cline{3-8} 
                                            & \multicolumn{1}{c}{}                       & \multicolumn{1}{l}{Average} & Worst-case & \multicolumn{1}{l}{Average} & Worst-case & \multicolumn{1}{l}{Average} & Worst-case \\ \hline
\multirow{3}{*}{FedAVG}                     & GCN                                         & \multicolumn{1}{c}{$0.976_{(0.072)}$}         &$1.232_{(0.081)}$           & \multicolumn{1}{l}{$3.011_{(0.121)}$}        &$3.863_{(0.104)}$            & \multicolumn{1}{l}{$0.932_{(0.012)}$}        & $1.104_{(0.023)}$           \\ \cline{2-8} 
                                            & GAT                                         & \multicolumn{1}{l}{$1.134_{(0.043)}$}        & $1.343_{(0.075)}$          & \multicolumn{1}{l}{$2.641_{(0.431)}$}        & $3.456_{(0.321)}$            & \multicolumn{1}{l}{$0.964_{(0.035)}$}        &  $1.197_{(0.032)}$         \\ \cline{2-8} 
                                            & \multicolumn{1}{c}{MPNN}                   & \multicolumn{1}{l}{$0.923_{(0.067)}$}        & $1.312_{(0.065)}$           & \multicolumn{1}{l}{$2.974_{(0.669)}$}        & $3.689_{(0.278)}$           & \multicolumn{1}{l}{$0.931_{(0.032)}$}        &  $1.231_{(0.035)}$          \\ \hline
\multirow{3}{*}{DRFA}                       & GCN                                         & \multicolumn{1}{l}{$1.204_{(0.068)}$}        &$1.357_{(0.073)}$            & \multicolumn{1}{l}{$3.465_{(0.264)}$}        & $3.928_{(0.276)}$           & \multicolumn{1}{l}{$1.265_{(0.037)}$}        & $ 1.542_{(0.029)}$           \\ \cline{2-8} 
                                            & GAT                                         & \multicolumn{1}{l}{$1.346_{(0.065)}$}        &$1.466_{(0.046)}$            & \multicolumn{1}{l}{$2.897_{(0.568)}$}        & $3.699_{(0.772)}$           & \multicolumn{1}{l}{$1.323_{(0.027)}$}        &   $1.525_{(0.063)}$         \\ \cline{2-8} 
                                            & MPNN                                        & \multicolumn{1}{l}{$1.233_{(0.072)}$}        &$1.386_{(0.064)}$            & \multicolumn{1}{l}{$3.211_{(0.798)}$}        & $3.899_{(0.754)}$           & \multicolumn{1}{l}{$1.148_{(0.043)}$}        &  $1.458_{(0.028)}$          \\ \hline
\multicolumn{1}{l}{\multirow{3}{*}{Ours}} & GCN                                         & \multicolumn{1}{l}{$1.086_{(0.056)}$}        &$1.121_{(0.059)}$            & \multicolumn{1}{l}{$3.123_{(0.134)}$}        & $3.532_{(0.096)}$           & \multicolumn{1}{l}{$1.126_{(0.0480)}$}        & $1.065_{(0.013)}$           \\ \cline{2-8} 
\multicolumn{1}{l}{}                      & GAT                                         & \multicolumn{1}{l}{$1.243_{(0.049)}$}        & $1.234_{(0.055)}$           & \multicolumn{1}{l}{$2.801_{(0.654)}$}       & $3.207_{(0.441)}$           & \multicolumn{1}{l}{$1.021_{(0.065)}$}        & $1.132_{(0.028)}$            \\ \cline{2-8} 
\multicolumn{1}{l}{}                      & MPNN                                        & \multicolumn{1}{l}{$0.989_{(0.054)}$}        &$1.132_{(0.061)}$            & \multicolumn{1}{l}{$3.112_{(0.649)}$}        &  $ 3.579_{(0.489)}$          & \multicolumn{1}{l}{$0.987_{(0.041)}$}        &    $1.187_{(0.029)}$         \\ \hline
\end{tabular}
\end{table*}
\begin{figure*}
\centering
    \subfigure[Impact of different noise level.]{\includegraphics[width=0.4\textwidth]{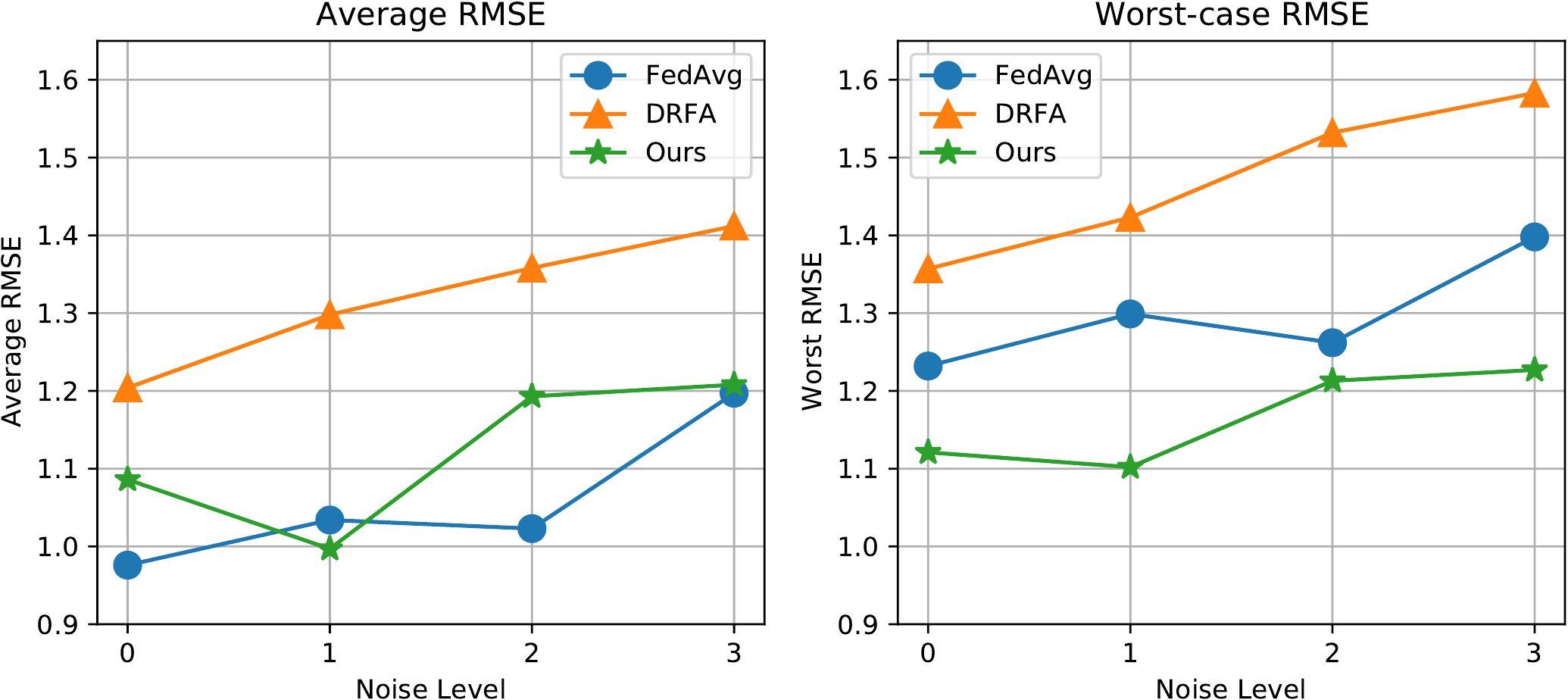}\label{fig:noise_level}}
     \subfigure[Different client number.]{\includegraphics[width=0.2\textwidth, height=0.17\textwidth]{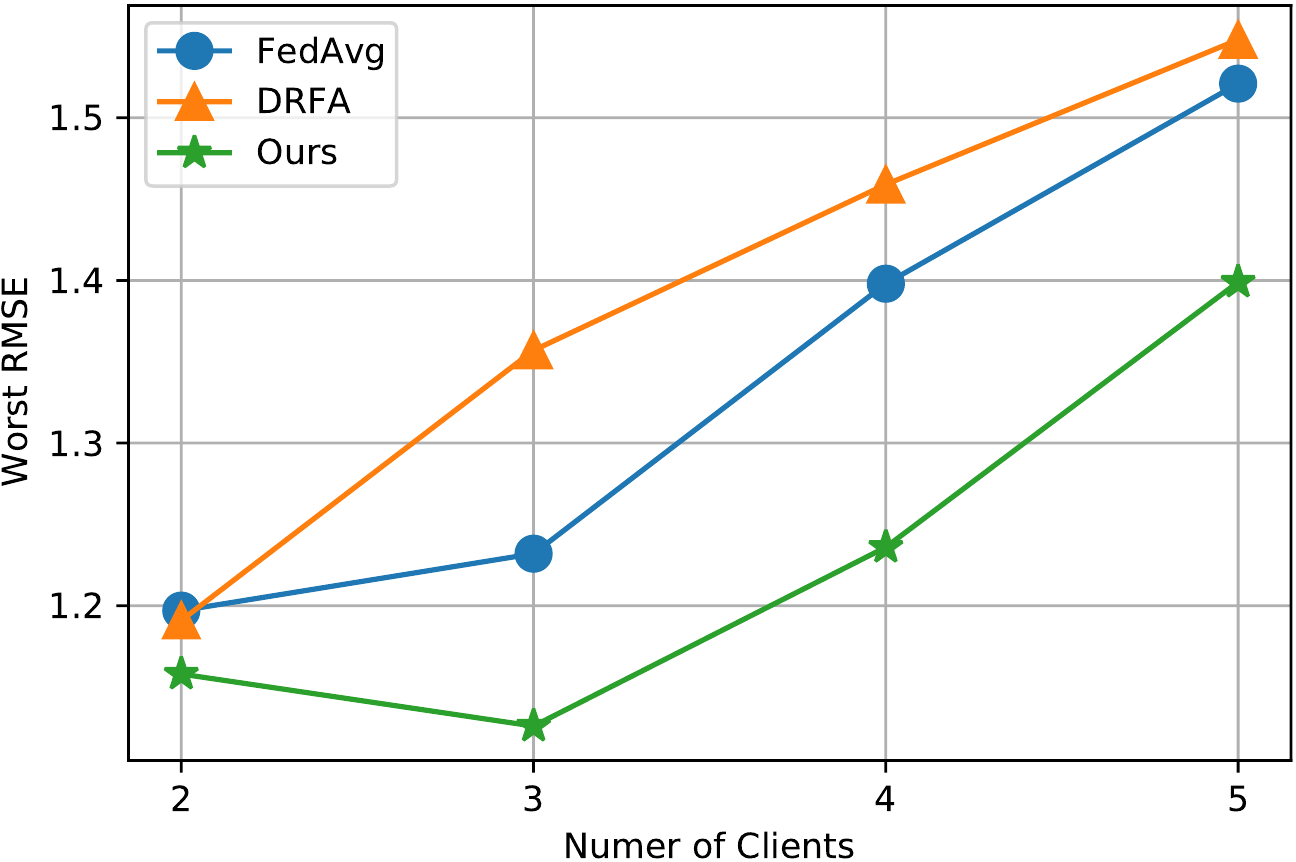}\label{fig:num_clients}}
     \subfigure[Imbalance mode.]{\includegraphics[width=0.2\textwidth, height=0.17\textwidth]{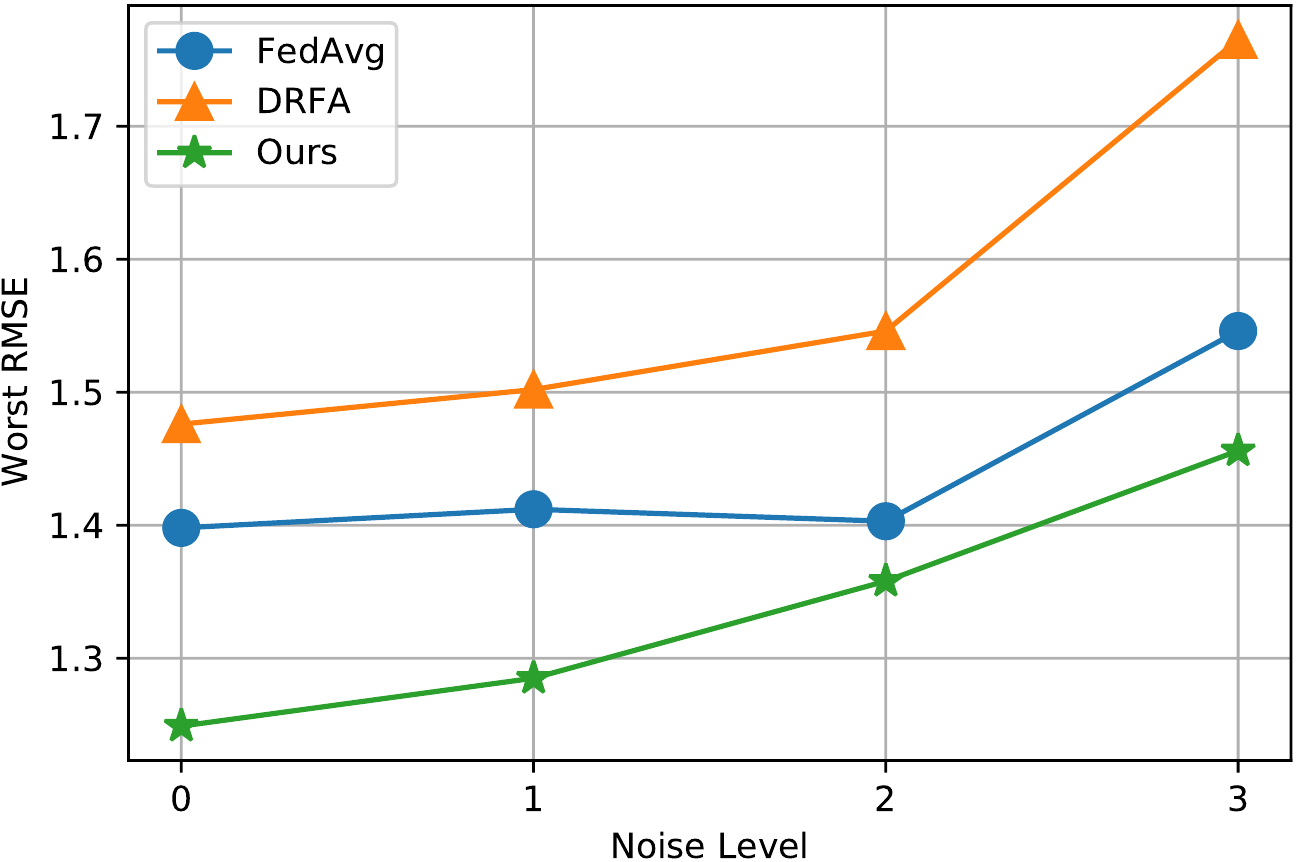}\label{fig:mode}}
     \caption{Evaluations under different settings including different noise levels, client number, and imbalance mode.}
\end{figure*}
Most previous work \cite{deng_distributionally_2021,mohri_agnostic_2019} focus on standard and small-scale academic benchmarks in which most samples are high-quality with negligible data noises. For example, one of the most related work to ours is DRFA \cite{deng_distributionally_2021}  which conducts experiments on Fashion MNIST. In this paper, we turn to evaluate our method on more challenging tasks in the drug discovery era. Except for drug discovery, we also provide additional results on other applications in the discussion part.
%Note that data noises are a notorious issue that widely exists in various drug discovery tasks. 
\subsection{Setup}

\nosection{Task and Dataset}
We consider two typical tasks in drug discovery, molecular property prediction and drug activity prediction. Specifically, molecular property
prediction aims to predict ADMET ((absorption, distribution, metabolism, excretion, and toxicity)) properties of a given molecular, which plays a key role in estimating drug candidates' safety and efficiency. For this task, we select two datasets from the MoleculenNet \cite{MolecularNet} with different properties, namely solubility, and lipophilicity. 
Drug affinity prediction dataset consists of $34000$ drug molecules and corresponding affinity degrees measured by the IC50 method from the ChEMBL website \citep{chembal}. The dataset comes from the 
DrugOOD project \citep{2022arXiv220109637J} which provides automated curator and benchmarks for the 
out-of-distribution problem in drug discovery.  

%aims to predict drug candidates' binding affinity against specific protein targets, which is a crucial task for hit finding in computer-aided and AI-aided drug discovery \cite{Sliwoski2013}. %\yt{will add the drugood reference}. 
%Our dataset . 
% \bz{yatao add some LBDD descriptions: done}

\nosection{Data Splitting}
A key step to evaluate different FL algorithms is data splitting, i.e, splitting the global dataset into different subsets and assigning them to different clients.
We consider the non-IID split in this paper. For molecular property prediction,
we employ scaffold splitting \cite{MolecularNet} based on scaffolds of molecular graph,  which ensures each client has different training distribution (varies in molecular scaffolds). 
For drug activity prediction, we employ assay splitting based on assay information, which ensures each client have data from different assay setting.
%For split ratio, we provide two additional modes, i.e., balance (each client has similar numbers of local samples ) and imbalance modes (each client varies in the number of local samples ).
Once the client splitting is finished, we split each local dataset into train/validation/test with a ratio $8:1:1$. We gather each local validation set into a global validation set for tuning various hyper-parameters in the training.

\nosection{Model and Training Setup}
In this paper, we select three mainstream GNN models for the above tasks including GCN \cite{GCN}, GAT \cite{gat} and MPNN \cite{MPNN}. The number of clients is set to $3$ in Table \ref{tab:overall_results}. We also provide results of 
other clients number in Figure \ref{fig:num_clients}.  We set the synchronization gap in Algorithm \ref{algorithm:1} to $5$ and set the local learning rate  to $0.01$. The total training round is set to $100$ for all datasets.

\nosection{Evaluation Metric}
We evaluate \rfdd  in terms of \emph{average test RMSE} and \emph{worst-case test RMSE}. Average test RMSE denotes the average RMSE of the global model over all local test datasets. This metric can reflect the average model performance
on different data distributions. While the worst-case test RMSE is the worst (highest) RMSE among all local test datasets. This metric can partially reflect the model's generalization ability on different data distributions. For all experiments, as suggested by prior work \cite{MolecularNet}, we apply three independent runs on three random seeded scaffold/assay splitting and report the mean and standard deviations.
\subsection{Results}
The overall results are shown in Table \ref{tab:overall_results}. As shown in the table, our method outperforms the previous state-of-the-art method (DRFA \cite{deng_distributionally_2021}) on all three datasets in terms of the worst-case RMSE among all clients. For example, in comparison with DRFA, our method has achieved
$17\%$ relative reduction of worst-case RMSE on the ESOL dataset owing to the use of mixup (from $1.357$ to $1.121$). We further plot the loss curve of the total training phase in Figure \ref{fig:convergence} which also shows our method's superiority in convergence speed.
Our method can also outperform the conventional method FedAVG in terms of the worst-case RMSE (1.121 vs 1.232). However, for average RMSE, the performance of our method is slightly worse than FedAVG. 
%Besides performance, our method can partially reduce the variance of most cases with different random seeds in terms of the standard deviations reported in Table~\ref{tab:overall_results}. This might be caused by that the mixup strategy can reduce the noise effect.
Despite the ESOL dataset, our method also shows its superiority in Freesolv and drug activity datasets (see more details in Table \ref{tab:overall_results}). 
%Among all these three datasets, xxx.

\nosection{Effects of label noise} From the result, we also observe that
conventional FedAVG method consistently outperforms  DRFA in terms of both average and worst-case accuracy. This might be caused by the label noise contained in local datasets.  To further validate this conjecture, we run three FL methods (FedAvg, DRFA, \rfdd ) on noisy datasets whose labels are perturbed by Gaussian noises with different magnitudes. Specifically, we randomly select $50\%$ clients and inject different levels (controlled by standard deviation) of Gaussian noises into their data labels with probability $30\%$.
As shown in Figure \ref{fig:noise_level}, along with the increase of noise level \footnote{larger level means the injected noises have larger standard deviation}, the worst-case RMSE of DRFA significantly drops which indicates that label noises can easily affect the performance of DRFA. The reason is that DRFA pessimistically optimizes the worst-case combination of local distribution which is typically dominated by the local training distribution with more noises. As a comparison, our method can boost the performance by combining DRFA with the mixup strategy in terms of both average and worst-case RMSE. Note that our method can also
achieve better worst-case RMSE than FedAvg. This indicates our method has better generalization ability on different local distributions than conventional FL methods which are appealing to real-world Non-IID settings.

\nosection{Number of clients}
The number of clients is set to $3$ in Table \ref{tab:overall_results}. We also provide results on other client number in Figure \ref{fig:num_clients}. As shown in the figure, our method consistently outperforms the other two methods in all settings in terms of the worst-case RMSE. And in most cases, the increase of the client number leads to significant
performance reduction. This may be caused by that we use non-iid splitting in this paper and the increase of client number can increase the model update estimation bias \cite{fedgnn}. We also note that DRFA achieves similar worst-case RMSE to FedAvg. One possible reason for this might be the decrease in client numbers can reduce the effect of local noises. 

\nosection{Imbalance mode}
Previous experiments are all conducted in the balance mode, i.e., the number of samples of each client is the same. In this part, we further evaluate our method under imbalance mode, i.e., the numbers of samples are highly unbalancing among all clients.
Specifically, we set the number of clients to 3 and let the sample ratio among these three clients be $7:2:1$. Under the imbalance mode, as shown in Figure \ref{fig:mode} we observe that the performance of DRFA becomes worse in contrast to the balanced mode. The situation becomes worse with the increase of noise level, which is due to that the client with minimum sample size magnifies the effect of local data noise on the final learned model. Also, our method can still reduce the noise effect under the imbalance mode.

\begin{figure}
    \centering
    \includegraphics[width=0.3\textwidth]{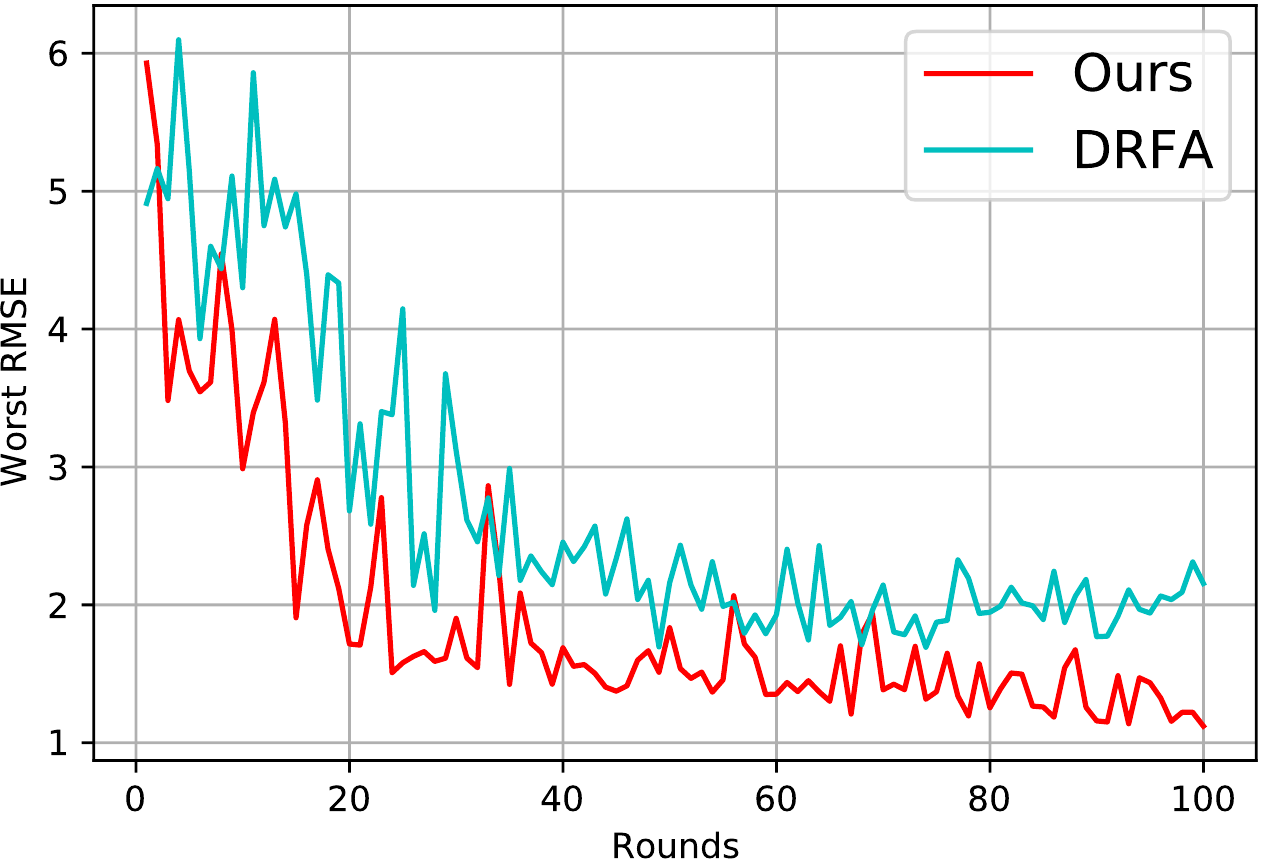}
    \caption{Convergence comparison between ours and DRFA.}
    \vspace{-0.5cm}
    \label{fig:convergence}
\end{figure}

%\nosection{Comparison with other FL methods}
%As discussed in related works, there are also other FL methods.

\nosection{Other application}
In the above discussions, we evaluate our method on challenging drug discovery tasks. To give a more comprehensive evaluation of our method, we further conduct experiments on some academic benchmarks including Fashion MNIST and UCI Adults, which are commonly used in previous work \cite{deng_distributionally_2021,mohri_agnostic_2019}. Following the setting of prior work~\cite{mohri_agnostic_2019}, we extract the subset of the data labeled with different labels and split this subset into three local datasets, and assign them to different clients.
For Fashion MNIST, the subsets are set to t-shirt/top, pullover, and shirt following the prior work \cite{mohri_agnostic_2019}. For the Adult dataset, we set subsets to doctorate and non-doctorate. The training round is set to $1000$, the overall results are shown in Table \ref{tab:others}. From this table, we observe that DRFA
outperforms all other two methods on the FMNIST dataset in terms of worst-case accuracy. One possible explanation is that the Fashion MNIST dataset contains high-quality labeled samples thus there is no noise that can affect the training procedure. This can be verified by conducting experiment on the "flipped" dataset, i.e., flipping the training samples' label with probability $30\%$. For the Fashion MNIST with label flipping, the performance of DRFA becomes worse than the others methods including ours. The performance drop can be caused by 
the injected flipping noise. Moreover, our method can significantly boost the worst-case classification accuracy by using mixup to reduce the noise effect. 
\begin{table}[]
\caption{The worst-case classification accuracy of FMNIST and Adult datasets. Flip means flipping the training samples' label with a given probability.}
\label{tab:others}
\centering
\begin{tabular}{cccc}
\hline
Dataset & FedAvg & DRFA & Ours \\ \hline
FMNIST  & $70.64_{(0.212)}$       &  $71.04_{(0.321)}$    &$70.89_{(0.234)}$       \\ \hline
Adult   &  $68.21_{(0.187)}$      & $67.31_{(0.256)}$     &$69.87_{(0.268)}$      \\ \hline
FMNIST+Flip &$67.21_{(0.342)}$  &$66.28_{(0.439)}$ &$69.78_{(0.376)}$ \\ \hline
Adult+Flip &$66.23_{(0.232)}$ &$64.86_{(0.314)}$ &$68.66_{(0.321)}$\\ \hline
\end{tabular}
\vspace{-0.5cm}
\end{table}
% Please add the following required packages to your document preamble:
% \usepackage{multirow}

\section{Related Works}
\subsection{ Distributionally Robust Optimization}
Distributionally robust optimization is a general learning paradigm that enforcing
the learned model to perform well over the local worst-case risk functions. Generally,
the local worst-case risk function refers to the supremum of the risk function
over \emph{ambiguity set} which is a vicinity of the empirical data distribution \cite{duchi2011adaptive}. Data distributions contained in the ambiguity set have small distances
to the true empirical data distribution wherein the distance is computed based on
well-known probability metrics such as $\phi$-divergence and wasserstein distance \cite{wang_mixup_2021}. 
Although DRO can be always formulated as a minimax optimization problem, in most real-world cases, exactly solving this minimax problem is intractable since it involves the supremum over infinite many probability distributions, thus two streams of literature to handle it by using either a primal-dual pair of infinite-dimensional linear programs \cite{esfahani2018data} or first-order optimality conditions of the dual. 
Amongst them, the most related to us is \cite{namkoong2016stochastic}, which proposes an efficient bandit mirror descent algorithm for the minimax problem when the uncertainty set is constructed using $f$-divergence.
% Motivated by the above work, in this paper, we combine DRO with local data mixup in the FL setting to minimize the worst-case combination of local empirical losses while reducing the effects of intra-client data noises.

\subsection{Federated Learning on Non-IID Data}
%Conventional FL method like FedAvg\cite{fedavg} has achieved remarkable performance over IID data from different clients. However, when applying FedAvg to Non-IID data, it suffers slow and unstable convergence while the learned global model has poor performance.
There is a rich body of literatures aims to modify the conventional FL method for solving inter-client data heterogeneity (Non-IID issue). One typical way to improve FedAvg on the Non-iid setting is to employ different local training strategies for different clients \cite{Li2020On,Fedprox,reddi2021adaptive}. We call this line of work \emph{adaptive federated learning} since they need to either adaptively adjust the local learning rate or modify the local empirical loss function by adding a client-aware regularizer. For example, 
%a previous work \cite{reddi2021adaptive} proposes an adaptive optimizer for local model updating (e.g., Adam and AdaGrad ) which allows to implicitly adjust local learning rate according to the loss function computed based on local data.
Scaffold~\cite{Scaffold} identifies that the performance degradation of FedAvg on Non-iid data is caused by \emph{"client drift"} issue. and introduces extra device variables to adaptively adjust the local loss.
%Another research line is personalized FL which first trains a global model then 
%fine-tune it using local client data.
%Prior work \cite{fed_non_iid_graph} points out that this type of work may fail
%when local data with high heterogeneity. And it is also inefficient to maintain an individual local model for each client.
 A more general and essential way to overcome the client  heterogeneity is to incorporate distributionally robust optimization (DRO) \cite{kwon_principled_2020,zhai_doro_2021,duchi2011adaptive} into conventional FL paradigm \cite{mohri_agnostic_2019,deng_distributionally_2021}.  
DRFA \cite{deng_distributionally_2021} is a recent work that proposes to optimize the agnostic (distributionally robust) empirical loss, which 
combines different local loss functions with learnable weights. This method theoretically ensures that the learned model can perform well over the worst-case local loss combination. Previous work in this line has shown its effectiveness in deep neural networks on some small-scale benchmarks (e.g., Fashion MNIST).

\vspace{-0.5em}
\section{Conclusion}
In this paper, we propose a general framework which simultaneously mitigate the inter-client data heterogeneity (i.e., Non-IID issue) and the intra-client data noise. We provide comprehensive theoretical analysis of the proposed approach. We also apply our method to real-world 
drug discovery tasks and show the superiority of our method in reducing the local noise effects. Our results show: (1)Directly applying DRO methods to FL settings suffers poor performance caused by intra-client data noise. (2) Combining mixup strategy with DRO can mitigate the noise side effects. Despite FL, our framework can be employed in other scenarios, e.g., fair machine learning. The future work includes improving the theoretical properties of mixup in the FL setting. Another interesting direction would be exploring the cross-client mixup strategy (i.e., mixup data from different clients). 
%%%%%%%%%%%%%%%%%%%%%%%%%%%%%%%%%%%%%%%%%%%%%%%%%%%%%%%%%%%%%%%%%%%%%%%%%%%%%%%
%%%%%%%%%%%%%%%%%%%%%%%%%%%%%%%%%%%%%%%%%%%%%%%%%%%%%%%%%%%%%%%%%%%%%%%%%%%%%%%

\bibliography{main}
\bibliographystyle{icml2022}
\appendix
\onecolumn
\icmltitle{Appendix For DRFLM: Distributionally Robust Federated Learning \\
with Inter-client Noise via Local Mixup}

In this appendix, 
Section~\ref{sec: appendix counter example} gives an detailed exploration on Example~\ref{example: exmp1} to illustrate why DRFA may fail in the federated learning setting while our proposed~\rfdd can rescue.
Section~\ref{sec: appendix generalization bound} provide the formal proof for the Theorem~\ref{thm: Generalization guarantee}.
Section~\ref{sec:optimization-guarantee} presents the proof for our~\rfdd's optimization guarantee (Theorem~\ref{thm:opt}) and the associated assumptions.
Section~\ref{appendix:non-linear} is an extension for Theorem~\ref{thm: Generalization guarantee}.

\section{Proofs for Counter-Examples}\label{sec: appendix counter example}
\subsection{One-Dimensional Example of ERM}\label{subsec-1d-example}
To prove our statements in Example~\ref{example: exmp1}, we first study the following one-dimensional example: 
Consider $P_{x,y}$ defined as following: \\
    i) The margin distribution $P_x$ is uniform on intervals $[-2,-1] \cup [1,2]$\\
    ii) Condition on $x$, $P(y\lvert x) =   1 - 2 \cdot \bm{1}\{x\leq  0\} $\\
Consider the ERM procedure over the function classes $\mathcal{F}: = \{f_x(z): = 1 -2 \cdot \bm{1}\{z< x\} \}$, we have in noiseless case the excess risk of the ERM estimator will converge to $0$ as sample-size increasing. Similar as in Example~\ref{example: exmp1}, for $1/2<p_1<1$ we define $I = [-2,-1]$, and suppose the random perturbation is given in form of flipping labels for $x\in I$ with probability $p_1$, then as sample-size increasing there will be $p_1$-ratio of samples has label $1$ in the interval $I$. As a result, we have then the decision boundary of ERM estimator will turns to the left-boundary point of $I$, i.e. $-2$, thus will have excess risk $1/2>0.$ While if we consider the all-sample mix-up with constant mixture factor $1/2$, then as sample-size increasing the mixed up sample turns to distributed as following:\\
   i) $1/4$ ratio samples lies between $[1,2]$ with label $1$,\\
     ii) $1/2$ ratio samples lies between $[-1,1]$ with label $\dfrac{1}{2}$ with probability $1-p_1$ and $1$  with probability $p$,\\
     iii) $1/4$ ratio samples lies between $[-2,-1]$ with label $1$ with probability $p^2$ ,$\dfrac{1}{2}$ with probability  $2p(1-p)$  and $0$ with probability $(1-p)^2$.\\
Now a classifier with decision boundary at $x_0= -2$ will incur the least squared error$\frac{1}{4}(1-p) + \frac{1}{2}p(1-p) + \frac{1}{4}p^2 = \frac{1}{2}$
while a classifier with decision boundary at $x_0 = -1 $ will incur the least squared error $\dfrac{1}{4}p^2+\dfrac{1}{2}p(1-p)<1/2$ whenever $p>1/2$.
Thus the ERM estimator over the mixed up distribution will select its decision boundary at the right-hand-side of $-1$, thus gives $0$ population risk.

\subsection{Proof of Statements in  Example~\ref{example: exmp1}}
Now we consider the Example~\ref{example: exmp1}:
% \textbf{our point is that if some client has a large noise in his local dataset, then the performance of global model trained by DRFA scheme is bad.}  
Firstly, notice that in this example, $\Lambda = \{\bm\lambda\in\mathbb{R}^2_+,\sum_{i=1}^2 \lambda_i = 1 \}$, thus the empirical FedAvg, empirical DRFA loss, and the empirical \rfdd loss and  are given by $\frac{1}{2}(f_1(\w) + f_2(\w) ) ,\max_{i\in \{1,2\}} f_i(\w),\max_{i\in \{1,2\}} \tilde{f}_i(\w)  $. Noticing that as sample-size increasing, we have for $\w$ with decision boundary of $f(\x;\w)$ lies in $-2<b<2$ will incur the following empirical loss: \begin{align*}
    f_1(\w) &= -\dfrac{1+b}{2}\cdot  \bm{1}\{ b< - 1 \} + \dfrac{b-1}{2}\cdot \bm{1}\{ b> 1 \}.\\
    f_2(\w) &= ( \dfrac{(1-p_1)p_2}{2} - \dfrac{1 + p_2 + b}{2} )\cdot  \bm{1}\{  b<-1-p_2 \} + (\dfrac{b-1}{2}  + \dfrac{p_1 p_2}{2} )\cdot \bm{1}\{ b> 1 \}  ... \\
    &+( -\dfrac{b+1}{2}(1-p_1) + \dfrac{b+1+p_2}{2}p_1  )  \bm{1}\{ b\in I \} .
\end{align*}
Thus by elementary calculation, we have as ${\tilde{N}}\to\infty$, the decision boundary corresponding to $f(\x;\w_{\tilde{N}}^{\text{DRFA}})$ will turn to $-1-p_2.$ On the other hand, we can show that the the decision boundary corresponding to $f(\x;\w_{\tilde{N}}^{\text{Avg}})$ can be  any point in  $[-1,1]$. 

Finally, 
analyze the decision boundary of $f(\x;\w_{\tilde{N}}^{\text{\rfdd}})$, we first assume as in example in section \ref{subsec-1d-example} that the mix-up of two data $\x,\x'$ is given by $\dfrac{1}{2}(\x +\x')$, such simplification will not loss the generality because it equals to the in-expectation value of mixed-up data when $\alpha = \beta.$ Firstly, notice that for client $1$, we have obviously that for $\w_1$ the empirical-risk-minimizer of $ \tilde{f}_1(\w),$ its decision boundary lies between $[-1,1]$. For client $2$, we have when looking at the first coordinate, there are fewer samples with label $-1$ when $x_1\leq 1$ than the one-dimensional example in section~\ref{subsec-1d-example}, thus the decision boundary of corresponding empirical risk minimizer of $\tilde{f}(\w)$ will in the right hand side of the decision boundary of mixed-up ERM considered in section~\ref{subsec-1d-example}, i.e. it will in the right hand side of $-1$. That shows $\tilde{f}_2(\w)$ also encourage its minimizer to lie in $[-1,2]$. As a consequence, we have the decision boundary of the ERM of $\max_{i\in \{1,2\}} \tilde{f}_i(\w)$ will lie in $[-1,2] $ as ${\tilde{N}}\to\infty$, which will has zero population risk.

\section{Proofs for Generalization Guarantee }\label{sec: appendix generalization bound}

\subsection{Mix-Up Effect for Generalizaed Linear Model}
Using the Lemma~3.3 in \cite{zhang_how_2021}, we have for $f(\w;\x,y) = \mu(\w^T \x) - y\w^T \x$ and $f_j(\w) = \dfrac{1}{N_j}\sum_{i=1}^{N_j} f(\w;x_j^i,y_j^i)$, when $\sum_{j} x_j^i = 0$, the second order approximation of \rfdd loss is given by \begin{align*}
	 \min_{\w \in W}\max_{\lambda\in \Lambda} \sum_{i=1}^N \lambda_i ({f}_i(\w)+   \underbrace{\dfrac{c}{2} \zeta_i(\w) \w^T\hat\Sigma_i \w   }_{R_i(\w)} )
\end{align*} 
with $c = \E_{\gamma \sim D_\gamma}[\dfrac{(1-\gamma)^2}{\gamma^2}] $ and $D_\gamma = \dfrac{\alpha}{\alpha+\beta} \text{Beta}(\alpha + 1,\beta)+\dfrac{\alpha}{\alpha + \beta} \text{Beta}(\beta + 1, \alpha).$ Now recall the assumption that $K^{-1}\leq  \lvert \mu''(z)\rvert \leq K$, we get then $\zeta_i(\w) \geq K^{-1}$ , thus for any $\lambda \in \Lambda$ we have
\begin{align*}
	 \sum_{i=1}^N \lambda_i \dfrac{c}{2} \zeta_i(\w) \w^T\hat\Sigma_i \w \geq  \min_{\lambda \in \Lambda} \sum_{i=1}^N \lambda_i \dfrac{c}{2K}  \w^T\hat\Sigma_i \w .
\end{align*}
By $\lVert x_{j}^i\rVert_2,\w\leq 1$, we have for $\hat{\w}$ the minimizer of \rfdd, \begin{align*}
	 \sum_{i=1}^N \lambda_i \dfrac{c}{2K}  \hat{\w}^T\hat\Sigma_i \hat{\w} \leq \dfrac{cK}{2}.
\end{align*} 
i.e. $\hat{w} \in \mathcal{W}_r$ with $r= K^2$.

\subsection{Proof of Theorem~\ref{thm: Generalization guarantee}}

We first establish the following standard uniform deviation bound result based on the Rademacher complexity, which is an analogue of Theorem 10 of \cite{mohri_agnostic_2019}: \begin{lemma} Assuming the loss function $\ell$ is bounded by $M$. Fix $\epsilon>0$ and $\mathbf{n}: = (n_1,\dots,n_m)$. Then for any $\delta>0$, with probability at least $1-\delta$ over the draw of samples $D_j\sim P_j^{N_j}$, the following inequality holds for all $\phi \in\mathcal{F}$ and $\lambda\in\Lambda $: \begin{align*}
 \sum_{j=1}^m \lambda_j\E_{P_j}[\ell(f(x),y)]&\leq \sum_{j=1}^N \dfrac{\lambda_j}{N_j}\sum_{i=1}^{N_j}\ell(\phi (x_i^j),y_i^j)\\
 &+\sum_{j=1}^N\lambda_j\big(  \mathcal{R}^j_{N_j}(\mathcal{F}) +M\sqrt{\dfrac{\log(N/\delta)}{2N_j}}   \big)
\end{align*}
where \begin{align*}
    \mathcal{R}^j_{N_j}(\mathcal{F}) =\E_{D_j}\big[\E_{\varepsilon}[\sup_{\phi \in \mathcal{F}}\dfrac{1}{N_j}\sum_{i=1}^{N_j}\varepsilon_{ij}\ell(\phi (x_i^j,y^j_i))]\big].
\end{align*}
\end{lemma}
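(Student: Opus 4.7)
The strategy is to reduce the joint bound over clients and over $\lambda\in\Lambda$ to $N$ independent standard uniform deviation bounds (one per client) and then aggregate by a union bound plus linearity in $\lambda$. The per-client bound is a textbook Rademacher/McDiarmid argument for i.i.d. samples; the federated flavour enters only through how these are combined into a single inequality that is uniform in both $\phi\in\mathcal{F}$ and $\lambda\in\Lambda$.

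\textbf{Step 1 (per-client uniform deviation).} For each client $j$, consider the empirical process
\[
\Phi_j(D_j) \;:=\; \sup_{\phi\in\mathcal{F}}\Bigl( \E_{P_j}[\ell(\phi(x),y)] - \tfrac{1}{N_j}\sum_{i=1}^{N_j} \ell(\phi(x_i^j),y_i^j) \Bigr).
\]
Since $\ell$ is bounded by $M$, replacing a single sample in $D_j$ changes $\Phi_j$ by at most $M/N_j$, so McDiarmid's inequality yields, with probability at least $1-\delta/N$,
\[
\Phi_j(D_j) \;\le\; \E[\Phi_j(D_j)] + M\sqrt{\log(N/\delta)/(2N_j)}.
\]
A standard symmetrization argument bounds $\E[\Phi_j(D_j)]$ by the Rademacher complexity $\mathcal{R}^j_{N_j}(\mathcal{F})$ defined in the statement.

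\textbf{Step 2 (union bound across clients).} Applying Step~1 to all $j\in[N]$ and union-bounding over the $N$ resulting events, we obtain a single event of probability at least $1-\delta$ on which, simultaneously for every $j\in[N]$ and every $\phi\in\mathcal{F}$,
\[
\E_{P_j}[\ell(\phi(x),y)] \;\le\; \tfrac{1}{N_j}\sum_{i=1}^{N_j}\ell(\phi(x_i^j),y_i^j) + \mathcal{R}^j_{N_j}(\mathcal{F}) + M\sqrt{\log(N/\delta)/(2N_j)}.
\]

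\textbf{Step 3 (aggregation over $\lambda$).} Because $\Lambda\subset\mathbb{R}^N_+$ with $\sum_j\lambda_j=1$, multiplying the $j$-th inequality by $\lambda_j\ge 0$ and summing preserves the direction of the bound. Since the right-hand side is linear in $\lambda$ and the event from Step~2 does not depend on $\lambda$, the resulting inequality holds simultaneously for all $\lambda\in\Lambda$ and all $\phi\in\mathcal{F}$. This yields exactly the stated inequality.

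\textbf{Expected obstacle.} The mathematical content is entirely routine; the only subtleties are notational. First, the loss used to build $\mathcal{R}^j_{N_j}(\mathcal{F})$ is the composition $\ell\circ\phi$, and one must be careful that symmetrization is applied to this composed function rather than to $\phi$ itself (this matches the definition written in the lemma). Second, the boundedness constant enters through the bounded-differences constant $M/N_j$, so the additive term scales as $M\sqrt{\log(N/\delta)/(2N_j)}$; getting the constants and the $\log(N/\delta)$ factor to line up with the displayed inequality is the only real bookkeeping step. Aside from these, the uniformity in $\lambda$ comes essentially for free from the linearity of the right-hand side in $\lambda$, so no additional covering or chaining argument over $\Lambda$ is needed.
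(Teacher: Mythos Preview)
Your proposal is correct and follows essentially the same route as the paper: a per-client Rademacher/McDiarmid uniform deviation bound, a union bound over the $N$ clients, and aggregation over $\lambda$ by linearity. The paper simply cites Theorem~8 of Bartlett and Mendelson for the per-client step rather than spelling out the McDiarmid-plus-symmetrization argument, but the underlying content is identical.
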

\begin{proof}[Proof of the Lemma] Recall that by Theorem~8 of \cite{Bartlett2002}, for any $M$-uniformly bounded and $L$-Lipschitz function $\ell$, for all $\phi \in \mathcal{F}$, with probability at least $1-\delta,$ \begin{align*}
	\E[\ell(\phi (x),y)]\leq \dfrac{1}{n}\sum_{i=1}^n\ell(\phi (x_i),y_i) + 2L \mathcal{R}_{n}(\mathcal{F})+ M \sqrt{\dfrac{\log(1/\delta)}{2n}}
\end{align*}
	Now applying this result to each client $j$ and use the union bound leads to the desired result.
\end{proof}

\begin{proof}[Proof of Theorem~\ref{thm: Generalization guarantee} ]

To apply the lemma, we need to compute the empirical Rademacher complexity $\mathcal{R}_{N_j}^j(\mathcal{F};D_j)$ firstly: \begin{align*}
\mathcal{R}_{N_j}^j(\mathcal{W}_r;D_j) & := \E_{\varepsilon}[\sup_{w\in \mathcal{W}_r}\dfrac{1}{N_j}\sum_{i=1}^{N_j}\varepsilon_{ij} \w^T x_i^j]\\
	& = \E_{\varepsilon}[\sup_{\w^T\bar{\Sigma}(\w) \w \leq r}\dfrac{1}{N_j}\sum_{i=1}^{N_j}\varepsilon_{ij} \w^T x_i^j  ]\\
	& = \E_{\varepsilon}[\sup_{\w^T\bar{\Sigma}(\w) \w\leq r}\dfrac{1}{N_j}\sum_{i=1}^{N_j}\varepsilon_{ij}  \w^T\bar{\Sigma}(\w)^{1/2}\big(\bar{\Sigma}(\w)^{1/2}\big)^\dagger   x_i^j  ]\\
	&\leq r \E_{\varepsilon}[\sup_{\w^T\bar{\Sigma}(\w) \w\leq r}\dfrac{1}{N_j}\big\lVert \sum_{i=1}^{N_j}\varepsilon_{ij}  \big(\bar{\Sigma}(\w)^{1/2}\big)^\dagger   x_i^j \big\rVert  ]\\
	&\leq \dfrac{r}{{N_j}}\sqrt{ \E_{\varepsilon}[\sup_{\Sigma \in \Sigma_\Lambda }\big\lVert \sum_{i=1}^{N_j}\varepsilon_{ij}  \Sigma^\dagger   x_i^j \big\rVert^2 ]}\\
	&\leq \dfrac{r}{{N_j}}  \sqrt{ \sup_{\Sigma \in \Sigma_\Lambda } \sum_{i=1}^{N_j}  (x_i^j )^T\Sigma^{\dagger}    x_i^j  }
\end{align*}
Now noticing \begin{align*}
    \mathcal{R}_{N_j}^j(\mathcal{W}_r) &= \E_{D_j}[\mathcal{R}_{N_j}^j(\mathcal{F};D_j)]\\
    &\leq \dfrac{r}{\sqrt{N_j}}\cdot \sqrt{\E[\dfrac{1}{N_j}  \sup_{\Sigma \in \Sigma_\Lambda } \sum_{i=1}^{N_j}  (x_i^j )^T\Sigma^{\dagger}    x_i^j  ]}
\end{align*} and applying Lemma~1 leads to the desired result.
\end{proof}

\subsection{Extending to Non-linear case}
\label{appendix:non-linear}
As in \cite{zhang_how_2021}, our analysis for generalized linear model can also be extended to the second-layer neural network with squared loss: In that case $f(\w;\x,y) = \big(y-\theta_1^T\sigma(W\x)-\theta_0\big)^2 $ with  $\theta_1\in \R^p, \theta_0\in\R, W\in \R^{p\times d} $ and $ \w$ consists of  $\theta_1,\theta_0,W)$ . If we perform mixup on the second layer and assume without loss of generality that $\{\sigma(Wx^j_i)\}_{i=1}^{N_j}$ are centered, then by Lemma~3.4 of \cite{zhang_how_2021}, we have the second order approximation of \rfdd~ loss is given by  \begin{align*}
	 \min_{\w \in W}\max_{\lambda\in \Lambda} \sum_{i=1}^N \lambda_i ({f}_i(\w)+   {\dfrac{c}{2} \zeta_i(\w) \w^T\hat\Sigma_i^\sigma \w   } ),
\end{align*} 
where $\hat\Sigma_i^\sigma$ is the sample covariance of $\sigma(Wx^j_i)$ of $i$-th client. Using the same argument as in GLM case, we have such loss forces its minimizer lies in \begin{align*}
	\mathcal{W}_r^\sigma: =  \{ \x\to\theta_1^T \sigma(W\x) + \theta_0: \min_{\lambda \in \Lambda} \sum_{i=1}^N \lambda_i  \theta_1^T\hat{\Sigma}_i^\sigma \theta_1 \leq r \}
\end{align*}
for some $r$.
To study the generalization ability, notice that by the similar argument as in generalized linear setting, we have \begin{align*}
    \mathcal{R}^j_{N_j}(\mathcal{W}_r^\sigma)\leq \dfrac{r}{\sqrt{N_j}}\cdot \sqrt{ \E[\sup_{\Sigma \in \Sigma^\sigma_\Lambda} \sum_{i=1}^{N_j}\text{tr}( \Sigma^\dagger \hat\Sigma^\sigma_j )   ] }.
\end{align*}
Where $\Sigma^\sigma_\Lambda = \{\sum_{j=1}^N \lambda_i \hat{\Sigma}_j^\sigma :\lambda \in \Lambda  \}$.
Now apply the Lemma~1 leads to the following formal result: \begin{theorem}[Generalization Bound of $\mathcal{W}_r^\sigma$] Suppose both $\theta_1,W,\theta_0$ are  bounded,
then there exists constants $L,B>0$ such that  for  $\w\in\mathcal{W}_r^\sigma$, the following inequality holds  with probability $1-\delta$:
\begin{align*} 
&\max_{\lambda\in\Lambda}    \sum_{j=1}^N\lambda_j\E_{P_j(\x,y)}[f(\w;\x,y)]\\
&\leq  \max_{\lambda\in\Lambda} \sum_{j=1}^N \dfrac{\lambda_j}{N_j}\sum_{i=1}^{N_j} f(\w;x_j^i,y_j^i) + \max_{\lambda\in\Lambda}\sum_{j=1}^N \lambda_j\sqrt{\dfrac{1}{N_j} }\big(\sqrt{\log\dfrac{N}{\delta} }\\
&+\sum_{j=1}^N rL\cdot \textcolor{red}{\sqrt{\big(H_j^\sigma\big)/N_j}} \big).
\end{align*} with 
 $H_j^\sigma: = \E_{P_j}[ \max_{\Sigma \in \Sigma^\sigma_\Lambda} \text{tr}\big(\Sigma^{\dagger}\hat{\Sigma}_j^\sigma\big)] 
$
\end{theorem}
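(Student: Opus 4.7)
The plan is to mimic the proof of Theorem~\ref{thm: Generalization guarantee} (the GLM case), using the observation that once we freeze the first-layer weight matrix $W$, the predictor $\x \mapsto \theta_1^T \sigma(W\x) + \theta_0$ is linear in $\theta_1$ with respect to the transformed features $\sigma(W\x)$, with a corresponding transformed covariance $\hat{\Sigma}_j^\sigma$. Thus the constrained hypothesis class $\mathcal{W}_r^\sigma$ is structurally the same as $\mathcal{W}_r$ with $\x$ replaced by $\sigma(W\x)$ and $\hat{\Sigma}_j$ replaced by $\hat{\Sigma}_j^\sigma$. The rest follows by applying Lemma~1 together with a Rademacher complexity bound.

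First, I would verify the hypotheses needed to invoke Lemma~1. Since $\theta_1, \theta_0, W$ are bounded by assumption and $\x,y$ lie in bounded sets, both $\sigma(W\x)$ and the squared loss $f(\w;\x,y) = (y-\theta_1^T\sigma(W\x)-\theta_0)^2$ are uniformly bounded by some constant $M$ and Lipschitz with some constant $L$ in the predictor output. This allows the lemma to be applied to $\mathcal{W}_r^\sigma$, reducing the theorem to bounding $\mathcal{R}^j_{N_j}(\mathcal{W}_r^\sigma)$.

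Next I would bound the empirical Rademacher complexity by repeating the chain of inequalities from the proof of Theorem~\ref{thm: Generalization guarantee}. Specifically, for each fixed realization of $W$, the constraint $\theta_1^T \bar{\Sigma}^\sigma(\theta_1) \theta_1 \le r$ (with $\bar{\Sigma}^\sigma(\theta_1) := \mathrm{argmin}_{\Sigma\in\Sigma_\Lambda^\sigma}\sum_i \lambda_i \theta_1^T\hat{\Sigma}_i^\sigma \theta_1$) lets us factor $\theta_1^T\sigma(W\x) = \theta_1^T\bar{\Sigma}^\sigma(\theta_1)^{1/2}(\bar{\Sigma}^\sigma(\theta_1)^{1/2})^\dagger \sigma(W\x)$; then Cauchy--Schwarz, Jensen's inequality, and enlarging the supremum over $\Sigma \in \Sigma_\Lambda^\sigma$ give
\begin{align*}
\mathcal{R}^j_{N_j}(\mathcal{W}_r^\sigma) \;\le\; \frac{r}{\sqrt{N_j}}\sqrt{\,\mathbb{E}_{P_j}\!\left[\max_{\Sigma\in\Sigma_\Lambda^\sigma}\mathrm{tr}(\Sigma^\dagger \hat{\Sigma}_j^\sigma)\right]} \;=\; \frac{r}{\sqrt{N_j}}\sqrt{H_j^\sigma/1},
\end{align*}
where the bias $\theta_0$ contributes only a centered scalar to each Rademacher sum and is absorbed into the $O(1/\sqrt{N_j})$ term by Hoeffding. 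The essential point is that the supremum over $W$ is already baked into the set $\Sigma_\Lambda^\sigma$ via $\hat{\Sigma}_j^\sigma$, so we do not need to control the nonlinear dependence on $W$ separately. Substituting this Rademacher bound into Lemma~1 and taking a union bound over clients (as in the GLM proof) yields the claimed inequality with probability $1-\delta$.

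The main obstacle is ensuring that the Rademacher argument carries through with $W$ treated as a \emph{variable}: the features $\sigma(W\x)$ and the quadratic constraint shape both depend on $W$, so one must verify that the supremum can validly be passed inside the Rademacher expectation. The cleanest remedy, which I would adopt, is exactly the author's trick of defining $\Sigma_\Lambda^\sigma$ as the set of all convex combinations of $\hat{\Sigma}_i^\sigma$ \emph{for every admissible $W$}, which makes the subsequent bound $\mathrm{tr}(\Sigma^\dagger \hat{\Sigma}_j^\sigma)$ uniform over $W$ and yields the heterogeneity-dependent $H_j^\sigma$ appearing in the theorem.
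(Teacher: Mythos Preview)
Your proposal is correct and follows essentially the same approach as the paper: apply Lemma~1 to reduce to a Rademacher bound, then repeat the Cauchy--Schwarz/pseudoinverse chain from the GLM proof with $\x$ replaced by $\sigma(W\x)$ and $\hat\Sigma_j$ by $\hat\Sigma_j^\sigma$, yielding $\mathcal{R}^j_{N_j}(\mathcal{W}_r^\sigma)\le \frac{r}{\sqrt{N_j}}\sqrt{H_j^\sigma}$. The paper's own argument is in fact terser than yours (it simply says ``by the similar argument as in the generalized linear setting'' and states the bound); your extra care about the bias $\theta_0$ and the dependence on $W$ goes beyond what the paper spells out, but does not change the route.
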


\section{Optimization Guarantee}
\label{sec:optimization-guarantee}
\begin{assumption}[Weighted Gradient Dissimilarity]
\label{as:WGD}
A set of local objectives $f_{i}(\cdot), i=1,2, \ldots, N$ exhibit $\Gamma$ gradient dissimilarity defined as $\Gamma:=\sup_{\boldsymbol{w} \in \mathcal{W}, \blambda \in \Lambda, i \in[n]}, \sum_{j \in[n]} \lambda_{j}\left\|\nabla f_{i}(\boldsymbol{w})-\nabla f_{j}(\boldsymbol{w})\right\|^{2} .$
\end{assumption}

\begin{assumption}
\label{as:GB}
The gradient w.r.t $\boldsymbol{w}$ and $\boldsymbol{\lambda}$ are bounded, i.e., $\left\|\nabla f_{i}(\boldsymbol{w})\right\| \leq$ $G_{w}$ and $\left\|\nabla_{\boldsymbol{\lambda}} F(\boldsymbol{w}, \boldsymbol{\lambda})\right\| \leq G_{\lambda}$.
\end{assumption}

\begin{assumption}
\label{as:BD}
The diameters of $\mathcal{W}$ and $\Lambda$  are bounded by  $D_{\mathcal{W}}$  and  $D_{\Lambda}$.
\end{assumption}

\begin{assumption}
\label{as:BV}
Let $\hat{\nabla} F(\boldsymbol{w} ; \boldsymbol{\lambda})$ be a stochastic gradient for $\boldsymbol{\lambda}$, which is the $N$-dimensional vector such that the $i$-th entry is $f_{i}(\w ; \boldsymbol{z})$, and the rest are zero. 
Then we assume $\left\|\nabla f_{i}(\boldsymbol{w} ; \boldsymbol{z})-\nabla f_{i}(\boldsymbol{w})\right\| \leq \sigma_{w}^{2}, \forall i \in[N], \boldsymbol{z}\in \mathcal{Z}$ and $\|\hat{\nabla} F(\boldsymbol{w} ; \boldsymbol{\lambda})-\nabla F(\boldsymbol{w} ; \boldsymbol{\lambda})\| \leq \sigma_{\lambda}^{2}$. 
\end{assumption}

\begin{lemma}[One iteration analysis \cite{deng_distributionally_2021}]
\label{one-iteration}
Under the assumptions \ref{as:WGD}-\ref{as:BV}, the following statement holds for algorithm~\ref{algorithm:1}:
$$
\begin{aligned}
\mathbb{E}\left[\Phi_{1 / 2 L}\left(\bar{\boldsymbol{w}}_{t}\right)\right] \leq \mathbb{E}\left[\Phi_{1 / 2 L}\left(\bar{\boldsymbol{w}}_{t-1}\right)\right]+2 \eta D_{\mathcal{W}} L^{2} \mathbb{E}\left[\frac{1}{m} \sum_{i \in \mathcal{D}^{\lfloor \frac{t-1}{\tau}\rfloor}}\left\|\boldsymbol{w}^{(i)}_{t-1}-\bar{\boldsymbol{w}}_{t-1}\right\|\right] \\
+2 \eta L\left(\mathbb{E}\left[\Phi\left(\bar{\boldsymbol{w}}_{t-1}\right)\right]-\mathbb{E}\left[F\left(\boldsymbol{w}_{t-1}, \boldsymbol{\lambda}_{\lfloor\frac{t-1}{\tau}\rfloor}\right)\right]\right)
-\frac{\eta}{4}\mathbb{E}\left[\left\|\nabla \Phi_{1 /(2 L)}\left(\bar{\boldsymbol{w}}_{t-1}\right)\right\|^{2}\right] .
\end{aligned}
$$
\end{lemma}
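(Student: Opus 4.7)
The plan is to prove this one-iteration descent inequality for the Moreau envelope via the standard recipe pioneered by Davis--Drusvyatsky and adapted to the federated min--max setting by \cite{deng_distributionally_2021}, now specialized to our mix-up objective. First I would introduce the proximal point $\hat{\w}_{t-1} := \operatorname{argmin}_{\w}\{\Phi(\w) + L\|\w - \bar{\w}_{t-1}\|^2\}$, so that by definition of the Moreau envelope one has $\Phi_{1/(2L)}(\bar{\w}_{t-1}) = \Phi(\hat{\w}_{t-1}) + L\|\hat{\w}_{t-1} - \bar{\w}_{t-1}\|^2$ and the useful identity $\nabla \Phi_{1/(2L)}(\bar{\w}_{t-1}) = 2L(\bar{\w}_{t-1} - \hat{\w}_{t-1})$. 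Using the same $\hat{\w}_{t-1}$ as a sub-optimal test point at the next iterate yields $\Phi_{1/(2L)}(\bar{\w}_t) \leq \Phi(\hat{\w}_{t-1}) + L\|\hat{\w}_{t-1} - \bar{\w}_t\|^2$, and subtracting the two gives the starting inequality
\[
\Phi_{1/(2L)}(\bar{\w}_t) - \Phi_{1/(2L)}(\bar{\w}_{t-1}) \leq L\bigl(\|\hat{\w}_{t-1} - \bar{\w}_t\|^2 - \|\hat{\w}_{t-1} - \bar{\w}_{t-1}\|^2\bigr).
\]

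Next I would substitute the virtual-average dynamics $\bar{\w}_t - \bar{\w}_{t-1} = -\eta \cdot \frac{1}{m}\sum_{i \in C_s} \nabla f(\w_{t-1}^{(i)}; \tilde{\z}_t^{(i)})$ (with $s = \lfloor (t-1)/\tau\rfloor$) and expand the squared norm. Taking expectations, unbiasedness of the mini-batch stochastic gradient conditional on the history plus the fact that $C_s$ is sampled from $\blambda_s$ turns the cross term into $2\eta\langle \hat{\w}_{t-1} - \bar{\w}_{t-1},\, \sum_i \lambda_{s,i} \nabla \tilde{f}_i(\w_{t-1}^{(i)})\rangle$. I would split this into a ``clean'' piece $2\eta\langle \hat{\w}_{t-1} - \bar{\w}_{t-1},\, \nabla_\w F(\bar{\w}_{t-1}, \blambda_s)\rangle$ plus a drift remainder controlled, via $L$-smoothness of each $\tilde{f}_i$ (inherited from $f_i$) and Cauchy--Schwarz together with the diameter bound $\|\hat{\w}_{t-1} - \bar{\w}_{t-1}\| \leq D_\mathcal{W}$, by $2\eta D_\mathcal{W} L^2 \cdot \frac{1}{m}\sum_{i \in C_s}\|\w_{t-1}^{(i)} - \bar{\w}_{t-1}\|$, which is precisely the drift term in the statement.

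The clean piece is then handled by the $L$-weak-convexity of $F(\cdot, \blambda_s)$: namely $F(\hat{\w}_{t-1}, \blambda_s) \geq F(\bar{\w}_{t-1}, \blambda_s) + \langle \nabla_\w F(\bar{\w}_{t-1}, \blambda_s), \hat{\w}_{t-1} - \bar{\w}_{t-1}\rangle - \tfrac{L}{2}\|\hat{\w}_{t-1}-\bar{\w}_{t-1}\|^2$. Rearranging, the inner product is upper-bounded by $F(\hat{\w}_{t-1}, \blambda_s) - F(\bar{\w}_{t-1}, \blambda_s) + \tfrac{L}{2}\|\hat{\w}_{t-1}-\bar{\w}_{t-1}\|^2$, and using $F(\hat{\w}_{t-1}, \blambda_s) \leq \max_{\blambda}F(\hat{\w}_{t-1}, \blambda) = \Phi(\hat{\w}_{t-1})$ produces the duality-gap term $2\eta L(\Phi(\bar{\w}_{t-1}) - F(\bar{\w}_{t-1}, \blambda_s))$ after noting $\Phi(\hat{\w}_{t-1}) - \Phi(\bar{\w}_{t-1}) \leq L\|\hat{\w}_{t-1} - \bar{\w}_{t-1}\|^2 - L\|\hat{\w}_{t-1}-\bar{\w}_{t-1}\|^2 + (\Phi(\bar{\w}_{t-1}) - F(\bar{\w}_{t-1},\blambda_s)) + F(\hat{\w}_{t-1},\blambda_s) - \Phi(\bar{\w}_{t-1})$ bookkeeping. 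The quadratic $L\|\hat{\w}_{t-1}-\bar{\w}_{t-1}\|^2$ terms combine with the Moreau identity $\|\hat{\w}_{t-1}-\bar{\w}_{t-1}\|^2 = \tfrac{1}{4L^2}\|\nabla \Phi_{1/(2L)}(\bar{\w}_{t-1})\|^2$ to yield the $-\tfrac{\eta}{4}\|\nabla \Phi_{1/(2L)}(\bar{\w}_{t-1})\|^2$ descent term. Finally the stochastic-variance term $\eta^2 L \mathbb{E}\|\frac{1}{m}\sum_i \nabla f(\w_{t-1}^{(i)}; \tilde{\z}_t^{(i)})\|^2$ is $O(\eta^2)$ and absorbed into the stated $O(\eta)$ bounds under the stepsize choice.

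The main obstacle will be the delicate constant-chasing that simultaneously produces all three structural terms at the prescribed coefficients $2\eta D_\mathcal{W} L^2$, $2\eta L$, and $-\eta/4$. In particular, correctly splitting the cross term so that the $L$-weak-convexity slack $\tfrac{L}{2}\|\hat{\w}_{t-1}-\bar{\w}_{t-1}\|^2$ and the Moreau-envelope optimality slack $L\|\hat{\w}_{t-1}-\bar{\w}_{t-1}\|^2$ net out to exactly $-\eta/4$ times the squared gradient, while the variance quadratic is dominated, is the fiddly step. Since the lemma is explicitly cited as a direct analogue of Theorem 2 in \cite{deng_distributionally_2021}, the only real new content is that the stochastic gradients are taken over the mix-up distribution $\tilde{\mathcal{D}}_i(\gamma)$; because $\tilde{f}_i$ inherits $L$-smoothness, boundedness and variance control from $f_i$ (as $\tilde{\x}$ is a convex combination and the loss is Lipschitz-smooth), Assumptions \ref{as:WGD}--\ref{as:BV} transfer verbatim and the Deng et al.\ argument applies with $f_i$ replaced by $\tilde{f}_i$ throughout.
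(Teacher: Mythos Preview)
The paper does not actually prove this lemma: it is cited verbatim from \cite{deng_distributionally_2021} (the label even reads ``One iteration analysis \cite{deng_distributionally_2021}''), and the appendix only invokes it as a black box in the proof of Theorem~\ref{thm:opt}. Your sketch is the correct standard argument---the Davis--Drusvyatskiy Moreau-envelope descent recipe, which is exactly what \cite{deng_distributionally_2021} does---so in substance you are reproducing the cited proof rather than deviating from it. The key steps (prox-point comparison, expanding the square, splitting the cross term into the clean $\nabla_\w F(\bar\w_{t-1},\blambda_s)$ piece plus an $L$-smoothness drift bounded by $D_{\mathcal W}$, using $L$-weak convexity and the prox optimality $\Phi(\hat\w_{t-1})\le \Phi(\bar\w_{t-1})-L\|\hat\w_{t-1}-\bar\w_{t-1}\|^2$, and the identity $\|\hat\w_{t-1}-\bar\w_{t-1}\|^2=\tfrac{1}{4L^2}\|\nabla\Phi_{1/(2L)}(\bar\w_{t-1})\|^2$) all land on the stated constants $2\eta D_{\mathcal W}L^2$, $2\eta L$, $-\eta/4$. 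Your remark that the mix-up only changes the sampling distribution, so Assumptions~\ref{as:WGD}--\ref{as:BV} carry over to $\tilde f_i$, is precisely the paper's own justification (``the proof is similar to Theorem~2~\cite{deng_distributionally_2021} except we apply it on the mixup distribution''). One minor caveat: the $\eta^2 L$ variance quadratic you mention does not appear explicitly in the lemma as stated, so strictly speaking it is either dropped as higher order in \cite{deng_distributionally_2021} or hidden in their constants; your plan to absorb it is consistent with how the lemma is used downstream.
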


\begin{lemma}[Bounded Norm Deviation) \cite{deng_distributionally_2021}]
\label{lm:bounded-norm-derivation}
$\forall i \in C_{\lfloor\frac{t}{\tau}\rfloor}$, the norm distance between $\bar{\boldsymbol{w}}_{t}$ and $\boldsymbol{w}^{(i)}_{t}$ is bounded as follows:
$$
\frac{1}{T} \sum_{t=0}^{T} \mathbb{E}\left[\frac{1}{m} \sum_{i \in C_{\lfloor\frac{t}{\tau}\rfloor}}\left\|\boldsymbol{w}^{(i)}_{t}-\bar{\boldsymbol{w}}_{t}\right\|\right] \leq 2 \eta \tau\left(\sigma_{w}+\frac{\sigma_{w}}{m}+\sqrt{\Gamma}\right) .
$$
\end{lemma}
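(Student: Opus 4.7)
The plan is to establish this drift bound by reducing to a single communication stage and then averaging over $t$. Within stage $s$, write $t = s\tau + k$ with $0 \le k < \tau$. At the synchronization point all selected clients share $\w^{(i)}_{s\tau} = \bar\w_{s\tau}$, so the deviation vanishes at $k = 0$. Unrolling the local SGD recursion, and using convexity of $\mathcal{W}$ (so that the virtual average $\bar\w_t$ lies in $\mathcal{W}$) together with non-expansiveness of the projection $\Pi_\mathcal{W}$, I would obtain
\[
\bigl\|\w^{(i)}_t-\bar\w_t\bigr\| \;\le\; \eta\sum_{\ell = s\tau}^{t-1}\bigl\|g^{(i)}_\ell-\bar g_\ell\bigr\|,
\]
where $g^{(i)}_\ell = \nabla f(\w^{(i)}_\ell;\tilde\z^{(i)}_\ell)$ evaluated on the local mixup dataset and $\bar g_\ell = \tfrac{1}{m}\sum_{j \in C_s} g^{(j)}_\ell$.

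Next I would decompose each summand by the triangle inequality into three parts and bound them in expectation: (a) the per-client stochastic noise $g^{(i)}_\ell - \nabla f_i(\w^{(i)}_\ell)$, which gives $\sigma_w$ by Assumption~\ref{as:BV}; (b) the averaged stochastic noise $\bar g_\ell - \tfrac{1}{m}\sum_{j \in C_s}\nabla f_j(\w^{(j)}_\ell)$, which contributes the $\sigma_w/m$ term via a variance-averaging argument across the $m$ independently sampled clients; and (c) the cross-client gradient-heterogeneity term $\tfrac{1}{m}\sum_{j \in C_s}\nabla f_j(\w^{(j)}_\ell) - \nabla f_i(\w^{(i)}_\ell)$. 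For (c), conditioning on the past iterates and taking expectation over $C_s \sim \blambda_s$, Jensen's inequality combined with Assumption~\ref{as:WGD} yields $\sqrt{\Gamma}$, provided the gradients inside the norm are evaluated at a common point.

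Summing over the at most $\tau$ inner steps produces a prefactor of $\eta\tau$ on each of the three contributions; averaging $t$ from $0$ to $T$ (equivalently, averaging over stages and over inner indices $k$) delivers the claimed $2\eta\tau(\sigma_w + \sigma_w/m + \sqrt{\Gamma})$, with the leading constant $2$ absorbing the triangle-inequality overhead.

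The main obstacle will be making (c) rigorous, since Assumption~\ref{as:WGD} only controls client gradient dissimilarity at a \emph{common} $\w$, whereas $\nabla f_j(\w^{(j)}_\ell)$ and $\nabla f_i(\w^{(i)}_\ell)$ sit at distinct local iterates. I would bridge this gap by invoking $L$-smoothness of each $f_i$ (implicit in the Moreau-envelope machinery used in Lemma~\ref{one-iteration}) to replace the local iterates by $\bar\w_\ell$, at the cost of an additive $L\bigl\|\w^{(j)}_\ell-\bar\w_\ell\bigr\|$ that folds back into the very quantity being bounded. This produces a Gronwall-type self-referential inequality across the $k\le\tau$ inner steps; a step-size calibration ensuring $4\eta\tau L \le 1$—satisfied by the schedule $\eta=\Theta\bigl(1/(LT^{3/4})\bigr)$, $\tau=T^{1/4}$ prescribed in Theorem~\ref{thm:opt}—absorbs the recursion and recovers the stated constant $2$.
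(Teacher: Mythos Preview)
The paper does not supply its own proof of this lemma: it is quoted verbatim from \cite{deng_distributionally_2021} and used as a black box in the proof of Theorem~\ref{thm:opt}. There is therefore no in-paper argument to compare your sketch against.

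That said, your outline is the standard local-SGD drift argument and is broadly sound. Two small remarks. First, a genuine variance-averaging over $m$ independent zero-mean stochastic errors in step~(b) would give order $\sigma_w/\sqrt{m}$ for the expected norm, not $\sigma_w/m$; since that contribution is dominated by the $\sigma_w$ from step~(a) anyway, the final inequality survives, but your attribution of the $\sigma_w/m$ coefficient to this mechanism is not quite right. Second, the one-step unrolling under projection is slightly more delicate than written: $\bar\w_t$ is an \emph{average of projected points} rather than the projection of an average, so nonexpansiveness should be applied pairwise to $\lVert\w^{(i)}_t-\w^{(j)}_t\rVert$ and then averaged over $j\in C_s$, rather than directly to $\lVert\w^{(i)}_t-\bar\w_t\rVert$. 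Neither point threatens the overall scheme, and the Gr\"onwall device you describe for reconciling Assumption~\ref{as:WGD}'s common-point requirement with the distinct local iterates via $L$-smoothness is exactly the right idea.
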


\begin{lemma}[Lemma 9, \cite{deng_distributionally_2021}]
\label{lm:phi-F-2}
For algorithm~\ref{algorithm:1}, under the assumptions~\ref{as:WGD}-\ref{as:BV}, the following statement holds true:
$$
\frac{1}{T} \sum_{t=1}^{T}\left(\mathbb{E}\left[\Phi\left(\bar{\boldsymbol{w}}_{t}\right)\right]-\mathbb{E}\left[F\left(\bar{\boldsymbol{w}}_{t}, \boldsymbol{\lambda}_{\lfloor\frac{t}{\tau}\rfloor}\right)\right]\right) \leq 2 \sqrt{S} \tau \eta G_{w} \sqrt{G_{w}^{2}+\sigma_{w}^{2}}+\gamma \tau \frac{\sigma_{\lambda}^{2}}{m}+\gamma \tau G_{\lambda}^{2}+\frac{D_{\Lambda}^{2}}{2 \sqrt{S} \tau \gamma}.
$$
\end{lemma}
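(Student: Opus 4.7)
The proof is a bookkeeping assembly of the three auxiliary results already stated: telescope the one-iteration descent inequality (Lemma~\ref{one-iteration}) over $t=1,\dots,T$, plug Lemma~\ref{lm:bounded-norm-derivation} into the local-drift term, plug Lemma~\ref{lm:phi-F-2} into the duality-gap term, and finally tune $(\tau,\eta,\gamma)$ so that every residual scales as a clean power of $T$. The only non-mechanical ingredient, inherited from \cite{deng_distributionally_2021}, is to observe that Assumptions~\ref{as:WGD}--\ref{as:BV} continue to hold with $f_i$ replaced by the mix-up loss $\tilde{f}_i$, because $\tilde{f}_i$ is an average of base losses evaluated at mixed samples, so Lipschitzness, bounded variance, bounded gradient, and gradient dissimilarity transfer up to absolute constants. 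This is precisely why the same template applies on the mix-up distribution.

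\textbf{Telescoping and plug-in.} I would first sum Lemma~\ref{one-iteration} from $t=1$ to $T$. Using that $\Phi_{1/(2L)}$ is bounded below by $\inf\Phi$, the telescoped difference $\Phi_{1/(2L)}(\bar{\boldsymbol{w}}_0)-\Phi_{1/(2L)}(\bar{\boldsymbol{w}}_T)$ is absorbed into an $O(1/(\eta T))$ remainder, and dividing by $\eta T/4$ gives
\[
\frac{1}{T}\sum_{t=1}^{T}\mathbb{E}\bigl[\|\nabla\Phi_{1/(2L)}(\bar{\boldsymbol{w}}_{t-1})\|^{2}\bigr]\;\lesssim\;\frac{1}{\eta T}\;+\;D_{\mathcal{W}}L^{2}\,(\mathrm{A})\;+\;L\,(\mathrm{B}),
\]
where $(\mathrm{A})$ is the averaged local-drift quantity bounded by Lemma~\ref{lm:bounded-norm-derivation} as $(\mathrm{A})\le 2\eta\tau(\sigma_w+\sigma_w/m+\sqrt{\Gamma})$, and $(\mathrm{B})$ is the averaged primal-dual gap $\mathbb{E}[\Phi(\bar{\boldsymbol{w}}_{t-1})]-\mathbb{E}[F(\bar{\boldsymbol{w}}_{t-1},\boldsymbol{\lambda}_{\lfloor(t-1)/\tau\rfloor})]$ bounded by Lemma~\ref{lm:phi-F-2} as $(\mathrm{B})\le 2\sqrt{S}\tau\eta G_w\sqrt{G_w^{2}+\sigma_w^{2}}+\gamma\tau\sigma_\lambda^{2}/m+\gamma\tau G_\lambda^{2}+D_\Lambda^{2}/(2\sqrt{S}\tau\gamma)$. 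This yields an unsimplified upper bound in the free parameters $\eta,\tau,\gamma,S=T/\tau,m,T$.

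\textbf{Rate tuning.} Substituting $\tau=T^{1/4}$, $\eta=1/(4LT^{3/4})$, $\gamma=1/\sqrt{T}$ (hence $\sqrt{S}=T^{3/8}$), a short calculation shows that the five relevant scales collapse to: $1/(\eta T)=\Theta(T^{-1/4})$; $D_{\mathcal{W}}L^{2}\eta\tau=\Theta(D_{\mathcal{W}}L(\sigma_w+\sqrt{\Gamma})/T^{1/2})$; $L\sqrt{S}\tau\eta\cdot G_w\sqrt{G_w^{2}+\sigma_w^{2}}=\Theta(G_w\sqrt{G_w^{2}+\sigma_w^{2}}/T^{1/8})$; $L\gamma\tau\cdot\{\sigma_\lambda^{2}/m,G_\lambda^{2}\}=\Theta(\{\sigma_\lambda^{2}/(mT^{1/4}),G_\lambda^{2}/T^{1/4}\})$; and $L\cdot D_\Lambda^{2}/(\sqrt{S}\tau\gamma)=\Theta(D_\Lambda^{2}/T^{1/8})$. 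Summing these reproduces the five-term bound in the theorem exactly.

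\textbf{Main obstacle.} The arithmetic above is elementary once the three lemmas are granted; the genuine work is the rate-balancing in the last step, where $\tau,\eta,\gamma$ must be selected so that the coupling scale $\sqrt{S}\tau\eta$ and its dual $1/(\sqrt{S}\tau\gamma)$ simultaneously minimize to the common bottleneck $T^{-1/8}$, while the complementary scales $1/(\eta T)$ and $\gamma\tau$ fall into the milder $T^{-1/4}$ regime and $\eta\tau$ into the even milder $T^{-1/2}$ regime. Any deviation from these exponents either inflates the duality-gap constant or slows the Moreau-envelope gradient, so verifying that this is the unique saddle point in the $(\tau,\eta,\gamma)$ parameterization is the single place where insight (as opposed to computation) is needed.
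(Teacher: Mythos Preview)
Your proposal does not prove Lemma~\ref{lm:phi-F-2}; it proves Theorem~\ref{thm:opt}. In your bound for the term $(\mathrm{B})$ you invoke Lemma~\ref{lm:phi-F-2} itself, but $(\mathrm{B})$ \emph{is} the left-hand side of Lemma~\ref{lm:phi-F-2}. The argument is therefore circular for the stated target, and the ``rate-tuning'' step is irrelevant here since the lemma is stated for generic $\eta,\gamma,\tau$.

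The actual content of Lemma~\ref{lm:phi-F-2} (proved in \cite{deng_distributionally_2021} and only cited in the present paper) is an analysis of the dual variable: since $F(\w,\blambda)$ is linear in $\blambda$, the gap $\Phi(\bar\w_t)-F(\bar\w_t,\blambda_{\lfloor t/\tau\rfloor})$ is a regret term for the projected stochastic ascent $\blambda_{s+1}=\prod_{\Lambda}(\blambda_s+\gamma\tau\boldsymbol{v}_s)$. One writes the standard projected-gradient three-point inequality for this update against any comparator $\blambda\in\Lambda$, telescopes the potential $\lVert\blambda_s-\blambda\rVert^2$ over $s=1,\dots,S$ (yielding the $D_\Lambda^2/(2\sqrt{S}\tau\gamma)$ term), controls the second moment and bias of the sampled direction $\boldsymbol{v}_s$ via Assumptions~\ref{as:GB}--\ref{as:BV} (yielding the $\gamma\tau G_\lambda^2$ and $\gamma\tau\sigma_\lambda^2/m$ terms), and bounds the mismatch between $F(\bar\w_t,\cdot)$ and $F(\tilde\w_{s+1},\cdot)$ caused by the primal drift within a stage using the gradient bound $G_w$ and variance $\sigma_w^2$ (yielding the $\sqrt{S}\tau\eta G_w\sqrt{G_w^2+\sigma_w^2}$ term). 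None of these ingredients appear in your write-up. What you have written---telescope Lemma~\ref{one-iteration}, plug in Lemmas~\ref{lm:bounded-norm-derivation} and~\ref{lm:phi-F-2}, then balance exponents---is exactly the paper's proof of Theorem~\ref{thm:opt}; you have simply mistaken which statement is the target.
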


\textbf{Proof of Theorem~\ref{thm:opt}}
\begin{proof}
From lemma~\ref{one-iteration} we know,
\begin{align*}
\frac{1}{T} \sum_{t=1}^{T} \mathbb{E}\left[\| \nabla \Phi_{1 / 2 L}\right.&\left.\left(\bar{\boldsymbol{w}}_{t}\right) \|^{2}\right] \\
\leq & \frac{4}{\eta T} \mathbb{E}\left[\Phi_{1 /(2 L)}\left(\bar{\boldsymbol{w}}_{0}\right)\right]+\frac{D_{\mathcal{W}} L^{2}}{2 T} \sum_{t=1}^{T} \mathbb{E}\left[\frac{1}{m} \sum_{i \in C_{\lfloor \frac{t}{\tau}\rfloor}}\left\|\boldsymbol{w}^{(i)}_{t}-\bar{\boldsymbol{w}}_{t}\right\|\right] \\
&+\frac{L}{2 T} \sum_{t=1}^{T}\left(\mathbb{E}\left[\Phi\left(\bar{\boldsymbol{w}}_{t}\right)\right]-\mathbb{E}\left[F\left(\bar{\boldsymbol{w}}_{t}, \boldsymbol{\lambda}_{\lfloor\frac{t}{\tau}\rfloor}\right)\right]\right) .
\end{align*}
Plugging in Lemma~\ref{lm:bounded-norm-derivation} and~\ref{lm:phi-F-2} yields:
$$
\begin{aligned}
\frac{1}{T} \sum_{t=1}^{T} \mathbb{E}\left[\left\|\nabla \Phi_{1 / (2 L)}\left(\bar{\boldsymbol{w}}_{t}\right)\right\|^{2}\right] 
\leq & \frac{4}{\eta T} \mathbb{E}\left[\Phi_{1/(2 L)}\left(\bar{\boldsymbol{w}}_{0}\right)\right]+\eta \tau D_{\mathcal{W}} L^{2}\left(\sigma_{w}+\frac{\sigma_{w}}{m}+\sqrt{\Gamma}\right) \\
&+\sqrt{S} \tau \eta G_{w} L \sqrt{G_{w}^{2}+\sigma_{w}^{2}}+\gamma \tau \frac{\sigma_{\lambda}^{2} L}{2 m}+\gamma \tau \frac{G_{\lambda}^{2} L}{2}+\frac{D_{\Lambda}^{2} L}{4 \sqrt{S} \tau \gamma}.
\end{aligned}
$$
Plugging in $\eta=\frac{1}{4 L T^{3 / 4}}, \gamma=\frac{1}{T^{1 / 2}}$ and $\tau=T^{1 / 4}$ we obtain the convergence rate as cliamed:
$$
\begin{aligned}
\frac{1}{T} \sum_{t=1}^{T} \mathbb{E}\left[\left\|\nabla \Phi_{1 /(2 L)}\left(\bar{\boldsymbol{w}}_{t}\right)\right\|^{2}\right] \leq & \frac{4}{T^{1 / 4}} \mathbb{E}\left[\Phi_{1 / 2 L}\left(\boldsymbol{w}_{0}\right)\right]+\frac{L^{2}}{T^{1 / 2}}\left(\sigma_{w}+\frac{\sigma_{w}}{m}+\sqrt{\Gamma}\right) \\
&+\frac{1}{T^{1 / 8}} G_{w} L \sqrt{G_{w}^{2}+\sigma_{w}^{2}}+\frac{\sigma_{\lambda}^{2} L}{2 m T^{1 / 4}}+\frac{G_{\lambda}^{2} L}{2 T^{1 / 4}}+\frac{D_{\Lambda}^{2} L}{4 T^{1 / 8}}.
\end{aligned}
$$
Note that here the expectation takes over the randomness incurred by the mixup sampling.
\end{proof}

\end{document}